
\typeout{IJCAI--25 Instructions for Authors}


\documentclass{article}
\pdfpagewidth=8.5in
\pdfpageheight=11in

\usepackage{ijcai25}

\usepackage{times}
\usepackage{soul}
\usepackage{url}
\usepackage[hidelinks]{hyperref}
\usepackage[utf8]{inputenc}
\usepackage[small]{caption}
\usepackage{graphicx}
\usepackage{amsmath}
\usepackage{amsthm}
\usepackage{booktabs}
\usepackage{algorithm}
\usepackage{algorithmic}
\usepackage[switch]{lineno}

\newcommand{\tablestyle}[2]{\setlength{\tabcolsep}{#1}\renewcommand{\arraystretch}{#2}\centering}

\usepackage{subcaption}
\usepackage{multirow}
\usepackage{amsmath,amssymb,mathtools,amsthm,algorithm,algorithmic,mathtools,booktabs,bm,color,natbib}

\newtheorem{definition}{Definition}
\newtheorem{lemma}{Lemma} 

\linenumbers

\urlstyle{same}



\newtheorem{theorem}{Theorem}





\pdfinfo{
/TemplateVersion (IJCAI.2025.0)
}

\title{Self-Consistent Model-based Adaptation for Visual Reinforcement Learning}


\author{
Xinning Zhou$^1$\thanks{The authors contributed equally to this work.}
\and
Chengyang Ying$^1$\footnotemark[1]\and
Yao Feng$^{1}$\and
Hang Su$^1$\And
Jun Zhu$^1$\\
\affiliations
$^1$Department of Computer Science \& Technology, Institute for AI, BNRist Center, Tsinghua-Bosch Joint ML Center, THBI Lab, Tsinghua University\\
\emails
zxn21@mails.tsinghua.edu.cn
}

\begin{document}
\nolinenumbers
\maketitle

\begin{abstract}

Visual reinforcement learning agents typically face serious performance declines in real-world applications caused by visual distractions. Existing methods rely on fine-tuning the policy's representations with hand-crafted augmentations. In this work, we propose Self-Consistent Model-based Adaptation (SCMA), a novel method that fosters robust adaptation without modifying the policy. By transferring cluttered observations to clean ones with a denoising model, SCMA can mitigate distractions for various policies as a plug-and-play enhancement. To optimize the denoising model in an unsupervised manner, we derive an unsupervised distribution matching objective with a theoretical analysis of its optimality. We further present a practical algorithm to optimize the objective by estimating the distribution of clean observations with a pre-trained world model. Extensive experiments on multiple visual generalization benchmarks and real robot data demonstrate that SCMA effectively boosts performance across various distractions and exhibits better sample efficiency.

\end{abstract}



\section{Introduction}

Visual reinforcement learning (VRL) aims to complete complex tasks with high-dimensional observations, which has achieved remarkable results in various domains~\citep{hafner2019dream,brohan2023can,Li2024Think2DriveER}. Since VRL agents are typically trained on clean observations with minimal distractions, they struggle to handle cluttered observations when deployed in real-world environments with unexpected visual distractions, such as changes in textures or complex backgrounds~\citep{Hansen2020SelfSupervisedPA,fu2021learning}. The discrepancy between clean and cluttered observations results in a serious performance gap.



The key to closing the performance gap is to make the policy invariant to distractions. Most existing methods aim to mitigate distractions by learning robust representations. In particular, one line of work is to align the policy's representation between the clean and cluttered observations. Due to the lack of paired data, prevailing methods use hand-crafted functions to create augmentations similar to cluttered observations~\citep{hansen2021generalization,Bertoin2022LookWY}. The effectiveness of such methods is typically limited in settings without prior knowledge of potential distractions. Another line of work addresses the problem through adaptation, which boosts deployment performance by fine-tuning the policy's representation with self-supervised objectives. However, existing adaptation-based methods often lead to narrow empirical increases~\citep{Hansen2020SelfSupervisedPA} or are effective only for a specific type of distractions~\citep{yang2024movie}. Moreover, the practical application of VRL often requires different policies to ensure robustness against the same types of distractions~\citep{devo2020towards}. For instance, domestic robots for different tasks all face the challenge imposed by residential backgrounds with similar distractions. Since policies trained for different tasks have distinct representations, current methods need to fine-tune them separately as the modification made to one policy's representations is not directly applicable to another policy.
To address the above issues, we propose \textbf{S}elf-\textbf{C}onsistent \textbf{M}odel-based \textbf{A}daptation (SCMA), a novel method that fosters robust adaptation for various policies as a plug-and-play enhancement. Instead of fine-tuning policies, SCMA utilizes a denoising model to mitigate distractions by transferring cluttered observations to clean ones. Therefore, the denoising model is policy-agnostic and can be seamlessly combined with any policy to boost performance under distractions without modifying its parameters. We further design an unsupervised distribution matching objective to optimize the denoising model in the absence of paired data. Theoretically, we show that the solution set of the unsupervised objective strictly contains the optimal solution in the supervised setting. The proposed objective regularizes the outputs of the denoising model to follow the distribution of observations in clean environments, which we choose to estimate with a pre-trained world model~\citep{hafner2019learning,hafner2023mastering}.

We practically evaluate the effectiveness of SCMA with the commonly adopted DMControlGB~\citep{Hansen2020SelfSupervisedPA,hansen2021generalization}, DMControlView~\citep{yang2024movie}, and RL-ViGen~\citep{yuan2024rl}, where the agent needs to complete continuous control tasks in environments with visual distractions. Extensive results show that SCMA significantly narrows the performance gap caused by various types of distractions, including natural video background, moving camera view, and occlusion. Also, we verify the effectiveness of SCMA with real-world robot data, showing its future potential in real-world deployment. In summary, the main contributions of this paper are:

\begin{itemize}    
    \item We address the challenge of visual distractions by transferring observations and derive an unsupervised distribution matching objective with theoretical analysis.
    \item We propose self-consistent model-based adaptation (SCMA), a novel method that promotes robust adaptation for different policies in a plug-and-play manner.
    \item Extensive experiments show that SCMA significantly closes the performance gap caused by various types of distractions. We also demonstrate the effectiveness of SCMA with real-world robot data.
\end{itemize}

\section{Related Work}
\subsection{Visual Generalization in RL}


The ability to generalize across environments with unknown distractions is a long-stand challenge for the practical application of reinforcement learning (RL) agents~\citep{chaplot2020object,shridhar2023perceiver,tomar2021learning,liu2023cross,ying2024peac}. Task-induced methods address the problem by learning structured representations that separate task-relevant features from confounding factors~\citep{fu2021learning,pan2022isolating,wang2022denoised}. Augmentation-based methods regularize the representation between augmented images and clean equivalents~\citep{hansen2021generalization,ha2023dream}, but they require prior knowledge of the test-time variations to manually design augmentations. Adaptation-based methods~\citep{Hansen2020SelfSupervisedPA,yang2024movie} do not assume the distractions and fine-tune the agent’s representation through self-supervised objectives. However, existing adaptation-based methods tend to lead to narrow empirical improvement~\citep{Hansen2020SelfSupervisedPA} or are limited to a specific type of visual distractions~\citep{yang2024movie}. Several studies aim to tackle this issue with foundation models~\citep{nair2022r3m,shah2023gnm}, but they still struggle with computational budget and inference time.

\subsection{Unsupervised Domain Transfer}
Unsupervised Domain Transfer aims to map data collected from the source domain to a related target domain without explicit supervision signals~\citep{wang2021survey}. The topic has been explored in various research areas, such as style transfer~\citep{Zhu2017UnpairedIT,Zhao2022EGSDEUI}, pose transfer~\citep{Li2023CrossLocoHM}, language translation~\citep{lachaux2020unsupervised,artetxe2017unsupervised} and so on. However, one key difference between our setting and theirs is that we can interact with the environments to collect data rather than relying on pre-collected static datasets. Therefore, we can obtain a certain level of control over the distribution of collected data by selecting specific action sequences, which makes it possible for us to achieve the desired transfer from cluttered observations to clean ones with unsupervised distribution matching~\citep{cao2018unsupervised,baktashmotlagh2016distribution}.

\begin{figure}
    \centering
    \includegraphics[width=0.7\linewidth]{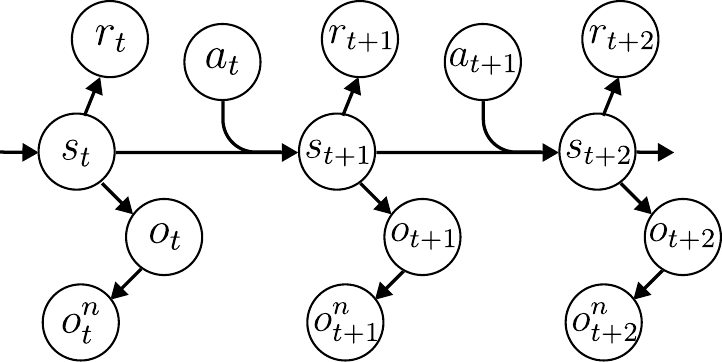}
    \caption{The graphical model of a NPOMDP, where $o_t$ and $o^n_t$ denote the clean and cluttered observation respectively.}
    \vspace{-1em}
    \label{fig_pgm}
\end{figure}

\begin{figure*}[tb]
\centering
\centerline{\includegraphics[width=1.0\linewidth]{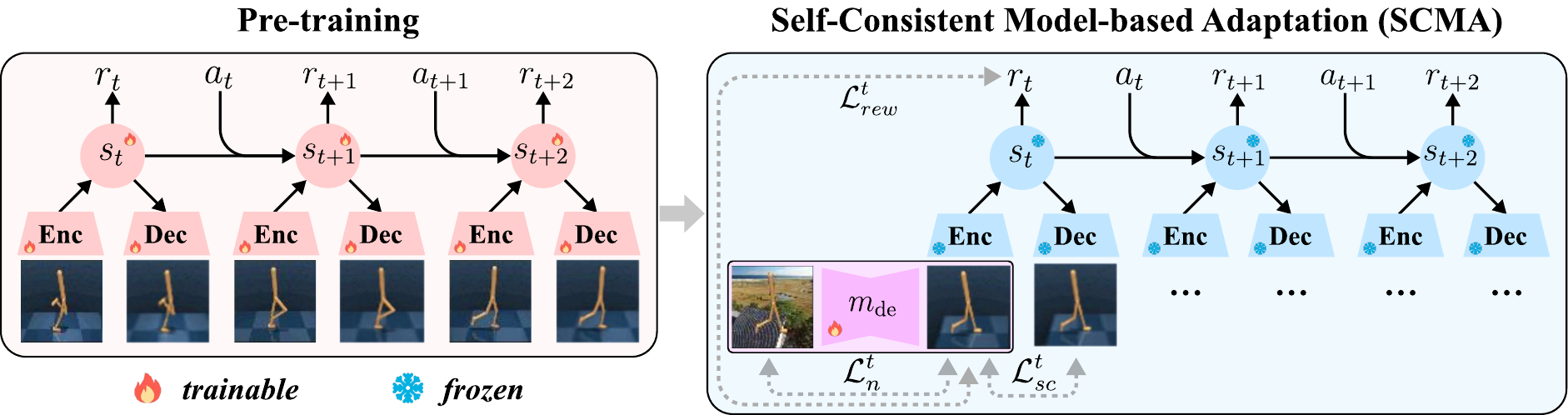}}
\caption{\textbf{An overview of Self-Consistent Model-based Adaption (SCMA)}. SCMA adapts the agent to distracting environments by transferring cluttered observations to clean ones with the denoising model $m_{\mathrm{de}}$. Leveraging a pre-trained world model, $m_{\mathrm{de}}$ can be efficiently optimized with self-consistent reconstruction, noisy reconstruction, and reward prediction loss.}
\label{fig_scma_algo}
\vspace{-1.2em}
\end{figure*}

\section{Methodology}
\label{sec_method}

We first present our problem formulation and the supervised objective $\mathcal{L}_{O}$ in Sec.~\ref{subsec_problem}. Then we introduce an unsupervised distribution matching surrogate $\mathcal{L}_{\mathrm{KL}}$ and analyze the connection between $\mathcal{L}_{\mathrm{KL}}$ and $\mathcal{L}_{O}$ in Sec.~\ref{subsec_mitigating}. Finally, we transform $\mathcal{L}_{\mathrm{KL}}$ into several optimizable adaptation losses in Sec.~\ref{subsec_adaptation}, along with practical enhancements in Sec.~\ref{subsec_homo}.

\subsection{Problem Formulation}
\label{subsec_problem}

We formalize visual RL with distractions as a Noisy Partially-Observed Markov Decision Process (NPOMDP) $\mathcal{M}_n = \langle \mathcal{S}, \mathcal{O}, \mathcal{A}, \mathcal{T}, \mathcal{R}, \gamma, \rho_0, f_n\rangle$. In a NPOMDP, $\mathcal{S}$ is the hidden state space, $\mathcal{O}$ is the discrete observation space, $\mathcal{A}$ denotes the action space, $\mathcal{T}: \mathcal{S}\times\mathcal{A}\mapsto \Delta(\mathcal{S})$ defines the transition probability distribution over the next state, $\mathcal{R}: \mathcal{S}\times\mathcal{A}\mapsto \mathbb{R}$ is the reward function, $\gamma$ is the discount factor, and $\rho_0$ is the initial state distribution. Here $f_n: \mathcal{O}\mapsto\mathcal{O}$ is a noise function that maps a clean observation $o_t$ to its cluttered version $o^{n}_t=f_n(o_t)$. Following the common settings~\citep{Hansen2020SelfSupervisedPA,Bertoin2022LookWY}, we assume that $f_n$ is injective so that the distractions do not corrupt the original information. The graphical model of NPOMDP is provised in Fig.~\ref{fig_pgm} 

Given the action sequence $a_{1:T}$, the conditional joint distribution describing the environment's latent dynamics is defined as:
\begin{equation}
\resizebox{1.0\linewidth}{!}{$
            \displaystyle
            \begin{aligned} 
                &p(o_{1:T}, o^{n}_{1:T}, r_{1:T}|a_{1:T}) \coloneqq \\
                &\int\prod\limits_{t=1}^{T} p(o^n_t|o_t) p(o_{t}|s_{\leq t}, a_{<t})p(r_{t}|s_{\leq t}, a_{<t})p(s_{t}|s_{<t},a_{<t})\mathrm{d}s_{1:T}.
            \end{aligned} 
$}
\label{eq_joint_distribution}
\end{equation}
We denote $p(o^n_t|o_t) = \delta(o^n_t - f_n(o_t))$ as the \textit{noising distribution} of $f_n$, which is a Dirac distribution with $\delta(\cdot)$ being the Dirac delta function~\citep{dirac1981principles}. 
Leveraging the Bayes' rule, the posterior distribution $p(o_t|o^n_t)$ can also be derived from Eq.~\ref{eq_joint_distribution}, which we denote as the \textit{posterior denoising distribution} of $f_n$. 

The performance of policies pre-trained with clean observations often degenerates when handling cluttered observations~\citep{Hansen2020SelfSupervisedPA,Bertoin2022LookWY}. To fill the performance gap, a natural way is to transfer cluttered observations to their corresponding clean ones by estimating the posterior denoising distribution $p(o_t|o^n_t)$. In the supervised setting, we can estimate $p(o_t|o^n_t)$ with a learnable distribution $q(o_t|o^n_t)$ by minimizing the following objective:
\begin{equation*}
\begin{aligned} 
\mathcal{L}_O & \coloneqq \mathbb{E}_{p(o_{1:T},o^n_{1:T}|a_{1:T})} \log q(o_{1:T}|o^n_{1:T})\\
&= \mathbb{E}_{p(o_{1:T},o^n_{1:T}|a_{1:T})}\sum_t \log q(o_t|o^n_t).
\end{aligned}
\end{equation*}

 We further show that $p(o_t|o^n_t)$ is a Dirac distribution when $f_n$ is injective. Therefore, we adopt a denoising model $m_{\mathrm{de}}$ and choose $q(o_t|o^n_t)=\delta(o_t-m_{\mathrm{de}}(o^n_t))$ in practice. More details can be found in Appendix~\ref{appendix_subsec_npomdp}.

\subsection{Mitigating Visual Distractions with Unsupervised Distribution Matching}
\label{subsec_mitigating}

The direct optimization of $\mathcal{L}_{\mathcal{O}}$ requires collecting paired observations $(o_t,o^n_t)$. Since we can only collect observations from clean environments (i.e. $p(o_{1:T}|a_{1:T})$) and distracting environments (i.e. $p(o^n_{1:T}|a_{1:T})$) separately, the absence of paired data imposes severe challenges. Inspired by unsupervised distribution matching~\citep{baktashmotlagh2016distribution,cao2018unsupervised}, we propose to minimize the KL-divergence between the action-conditioned distribution of the clean and transferred observations, which leads to the following unsupervised surrogate $\mathcal{L}_{\mathrm{KL}}$ (see Appendix~\ref{appendix_subsec_distribution} for details):
\begin{equation*}
\begin{aligned}
    \mathcal{L}_{\mathrm{KL}} & \coloneqq \mathrm{D}_{\mathrm{KL}} \Big(p(o^n_{1:T}|a_{1:T})q(o_{1:T}|o^n_{1:T}) \\
    &\quad\quad\quad\quad\quad\quad \big\Vert p(o_{1:T}|a_{1:T})q(o^n_{1:T}|o_{1:T})\Big),
\end{aligned}
\end{equation*}
where $q(o_{1:T}|o^n_{1:T})=\prod_t q(o_t|o^n_t)$ and $q(o^n_{1:T}|o_{1:T})=\prod_t q(o_t|o^n_t)$ are learnable noisy and denoising distribution respectively.

To analyze the connection between $\mathcal{L}_{\mathrm{KL}}$ and $\mathcal{L}_{\mathcal{O}}$, we first introduce the concept of \textit{homogeneous noise functions}, which are theoretically indistinguishable in the unsupervised setting ad defined below (Details are deferred to Appendix~\ref{appendix_subsec_distribution}):
\begin{definition}
\label{def_homo_noise}
For noise functions $f_{n_i}$, we denote $o^{n_i}_t=f_{n_i}(o_t)$ as its cluttered observation. Given the distribution of clean observation $p(o_{1:T}|a_{1:T})$, we call the noise functions $f_{n_1}$ and $f_{n_2}$ to be homogeneous under $p(o_{1:T}|a_{1:T})$ if their cluttered observations have the same distribution, i.e.:
\begin{equation*}
\begin{array}{l}
    f_{n_1} \equiv_{p} f_{n_2} \Leftrightarrow \;p(o^{n_1}_{1:T}|a_{1:T}) = p(o^{n_2}_{1:T}|a_{1:T}),\\
    \\
    \text{where } \;p(o^{n_i}_{1:T}|a_{1:T}) = \sum\limits_{o_{1:T}}p(o_{1:T}|a_{1:T})p(o^{n_i}_{1:T}|o_{1:T}).
\end{array}
\end{equation*}
We define $\mathcal{H}_{f_n}^p = \left \{ f_{n_i} | f_{n_i}\equiv_p f_{n}\right \}$, which includes all homogeneous noise functions of $f_n$ under $p(o_{1:T}|a_{1:T})$.
\end{definition}

Then we show that the solution set of $\mathcal{L}_{\mathrm{KL}}$ equals the set of posterior denoising distribution of noise functions in $\mathcal{H}^p_{f_n}$. Since $f_n$ is clearly in $\mathcal{H}^p_{f_n}$, the solution set of $\mathcal{L}_{\mathrm{KL}}$ contains $p(o_t|o^n_t)$, which is the optimal solution to $\mathcal{L}_{\mathcal{O}}$.

\begin{theorem}[Proof in Appendix~\ref{appendix_subsec_optimality}]
\label{theorem_homo}
Given $p(o_{1:T}|a_{1:T})$ and $p(o^n_{1:T}|a_{1:T})$, let $\mathcal{Q}$ denote the solution set of $\mathcal{L}_{\mathrm{KL}}$:
\begin{equation*}
    \mathcal{Q} \coloneqq \mathop{\arg\min}\limits_{q(o_t|o^n_t)} \min\limits_{_{q(o^n_t|o_t)}}\mathcal{L}_{\mathrm{KL}}.
\end{equation*}
It follows that $\mathcal{Q}$ equals the set of posterior denoising distributions of noise functions in $\mathcal{H}^p_{f_n}$:
\begin{equation}
    \mathcal{Q} = \left \{ p(o_t |o^{n_i}_t)|f_{n_i}\in \mathcal{H}^p_{f_n}\right \}.
\end{equation}
\end{theorem}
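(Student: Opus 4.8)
The plan is to reduce the set equality to a characterization of zero-KL solutions and then prove the two inclusions separately. Since $\mathcal{L}_{\mathrm{KL}}$ is a KL divergence it is nonnegative, and it vanishes if and only if the two joint distributions coincide, i.e.
\begin{equation*}
p(o^n_{1:T}|a_{1:T})\,q(o_{1:T}|o^n_{1:T}) = p(o_{1:T}|a_{1:T})\,q(o^n_{1:T}|o_{1:T})
\end{equation*}
as distributions over $(o_{1:T},o^n_{1:T})$. The forward construction below exhibits one such solution, so the global minimum is indeed $0$; consequently $\mathcal{Q}$ is exactly the set of denoising factors $q(o_t|o^n_t)$ for which \emph{some} noising factor $q(o^n_t|o_t)$ makes the displayed identity hold. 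The workhorse throughout is Bayes' rule in the form $p(o_{1:T}|a_{1:T})\,p(o^{n_i}_{1:T}|o_{1:T}) = p(o^{n_i}_{1:T}|a_{1:T})\,p(o_{1:T}|o^{n_i}_{1:T})$, i.e. the two factorizations of the joint attached to any noise function.

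For the inclusion $\supseteq$, I would argue constructively. Given any $f_{n_i}\in\mathcal{H}^p_{f_n}$, set the denoising factor to the posterior denoising $q(o_t|o^n_t)=p(o_t|o^{n_i}_t)$ and the noising factor to the Dirac $q(o^n_t|o_t)=\delta(o^n_t-f_{n_i}(o_t))$. Because $f_{n_i}$ is injective, its posterior denoising is itself a product of Diracs and therefore factorizes over $t$, so $\prod_t p(o_t|o^{n_i}_t)=p(o_{1:T}|o^{n_i}_{1:T})$. Using homogeneity to replace $p(o^n_{1:T}|a_{1:T})$ by $p(o^{n_i}_{1:T}|a_{1:T})$ on the left and Bayes' rule on both sides, each side of the zero-KL identity collapses to the single joint $p(o_{1:T},o^{n_i}_{1:T}|a_{1:T})$; the identity holds, and hence $p(o_t|o^{n_i}_t)\in\mathcal{Q}$.

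For the inclusion $\subseteq$, I would start from the zero-KL identity and solve for the denoising factor, obtaining $\prod_t q(o_t|o^n_t) = \big(p(o_{1:T}|a_{1:T})/p(o^n_{1:T}|a_{1:T})\big)\prod_t q(o^n_t|o_t)$. The plan is then to (i) show the learned noising factor $q(o^n_t|o_t)$ must in fact be a deterministic, injective map $f_{n_i}$; (ii) read off from the $o^n$-marginal constraint $\sum_{o_{1:T}}p(o_{1:T}|a_{1:T})\prod_t q(o^n_t|o_t)=p(o^n_{1:T}|a_{1:T})$ that the pushforward of the clean law under $f_{n_i}$ equals $p(o^n_{1:T}|a_{1:T})$, which is precisely the statement $f_{n_i}\equiv_p f_n$; and (iii) substitute the Dirac $q(o^n_t|o_t)=\delta(o^n_t-f_{n_i}(o_t))$ back and invoke Bayes' rule to conclude $q(o_t|o^n_t)=p(o_t|o^{n_i}_t)$.

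The main obstacle is step (i): a priori $q(o^n_t|o_t)$ is an arbitrary conditional, and marginal matching alone does \emph{not} force it to be deterministic, since a ``mixing'' noising factor can still reproduce the correct cluttered marginal. The essential point I expect to lean on is that the $q$-factors are tied across time only through the product structure $\prod_t$, whereas the true clean law $p(o_{1:T}|a_{1:T})$ carries genuine temporal correlation induced by the latent dynamics, together with the injectivity of $f_n$, which makes $p(o^n_{1:T}|a_{1:T})$ supported on the graph of a deterministic map. Matching the full correlated joint — not merely its marginals — should then rule out any stochastic noising factor and pin $q(o^n_t|o_t)$ down to a single injective $f_{n_i}$, after which steps (ii)--(iii) are routine applications of the definition of homogeneity and Bayes' rule.
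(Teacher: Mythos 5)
Your $\supseteq$ direction matches the paper's: pick $q(o^n_t|o_t)=\delta(o^n_t-f_{n_i}(o_t))$ and $q(o_t|o^n_t)=p(o_t|o^{n_i}_t)$, check the zero-KL identity via homogeneity and Bayes' rule. The gap is exactly where you flag it, in step (i) of the $\subseteq$ direction, and your proposed repair does not work. You correctly observe that marginal matching alone cannot force $q(o^n_t|o_t)$ to be deterministic --- e.g.\ with two equiprobable clean observations, taking both $q(o^n|o)$ and $q(o|o^n)$ uniform gives $p_n(o^n)q(o|o^n)=p(o)q(o^n|o)=1/4$ everywhere, hence $\mathcal{L}_{\mathrm{KL}}=0$, yet this $q(o|o^n)$ is not the posterior of any injective noise function. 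Your hope of excluding such mixing solutions via the temporal correlations of $p(o_{1:T}|a_{1:T})$ fails for the same reason: nothing in the theorem's hypotheses prevents the clean law from being (close to) i.i.d.\ across time, and indeed the paper's own proof deliberately reduces to a single pair $(o,o^n)$ with the time index dropped, so no temporal structure is available to exploit.

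What actually closes the gap in the paper is a modeling restriction you have not invoked: both conditionals are \emph{parameterized} as Dirac distributions from the outset, $q(o|o^n)=\delta(o-m_{\mathrm{de}}(o^n))$ and $q(o^n|o)=\delta(o^n-m_{\mathrm{n}}(o))$, so determinism is an assumption on the hypothesis class, not a consequence of the zero-KL identity (and the paper notes this restriction does not raise the attainable minimum). Given determinism, the only thing left to prove is injectivity of the optimal maps (the paper's Lemma~\ref{lemma_injective}): after restricting $\mathcal{O}^c,\mathcal{O}^n$ to observations of positive probability, $p_n(o^n)$ is a permutation of $p(o)$, and summing the identity $p_n(o^n)q^*(o|o^n)=p(o)q^*(o^n|o)$ over $o^n$ gives $\sum_{o^n}p_n(o^n)q^*(o|o^n)=p(o)>0$; if $m^*_{\mathrm{de}}$ collapsed two cluttered observations onto one clean one, some other clean observation would receive zero mass, a contradiction. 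With injectivity in hand, your steps (ii) and (iii) go through as you describe and agree with the paper. So your outline is recoverable, but only after replacing the temporal-correlation argument for step (i) with the explicit Dirac parameterization assumption; as written, step (i) is unprovable and the claimed set equality would be false for unrestricted conditionals.
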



Generally speaking, since homogeneous noise functions are theoretically indistinguishable in the unsupervised setting, we can only assure that $m_{\mathrm{de}}$ learns to transfer cluttered observations back to clean ones according to a noise function in $\mathcal{H}^p_{f_n}$. In Appendix~\ref{appendix_subsec_optimality}, we further reveal the relationship between the number of homogeneous noise functions and properties of $p(o_{1:T}|a_{1:T})$. We also discuss possible ways to reduce the number of homogeneous noise functions in Sec.~\ref{subsec_homo} so that $\mathcal{Q}$ only contains $p(o_t|o^n_t)$.


To simplify the computation, we show in Appendix~\ref{appendix_subsec_distribution} that $\mathcal{L}_{\mathrm{KL}}$ leads to the following objective, where $C$ is a constant:
\begin{equation}
\begin{aligned}
\label{eq_distribution_matching}
    \mathcal{L}_{\mathrm{KL}} &= \mathbb{E}_{p(o^n_{1:T}|a_{1:T})} \Big[\mathrm{D}_{\mathrm{KL}}\big (q(o_{1:T}|o^n_{1:T})\|p(o_{1:T}|a_{1:T})\big ) \\
    &\quad\quad\quad\quad\quad\quad -\mathbb{E}_{q(o_{1:T}|o^n_{1:T})}[\log q(o^n_{1:T}|o_{1:T})]\Big ] + C\\
    &= \mathbb{E}_{p(o^n_{1:T}|a_{1:T})}\mathbb{E}_{q(o_{1:T}|o^n_{1:T})} \Big [ -\log p(o_{1:T}|a_{1:T}) \\
    &\quad\quad\quad\quad\quad\quad-\log q(o^n_{1:T}|o_{1:T}) \Big] +C.
\end{aligned}
\end{equation}

Intuitively, the first term regularizes the transferred observations to follow the clean environments' latent dynamics $p(o_{1:T}|a_{1:T})$. The second term ensures that the transferred observations remain relevant to the cluttered observations and thus preserve necessary information.

\subsection{Adaptation with Pre-trained World Models}
\label{subsec_adaptation}


Based on the above analyses, we now present the Self-Consistent Model-based Adaptation (SCMA) method, a practical adaptation algorithm that mitigates distractions by optimizing the denoising model with Eq.~\ref{eq_distribution_matching}.

Specifically, Eq.~\ref{eq_distribution_matching} involves calculating the action-conditioned distribution $p(o_{1:T}|a_{1:T})$, which we estimate with a pre-trained world model~\citep{hafner2019learning,hafner2023mastering}. Given a clean trajectory $\tau = \left \{ o_1, a_1, \cdots, o_T, a_T\right \}$, the world model estimates $\log p(o_{1:T}|a_{1:T})$ with $\log p_{\scriptscriptstyle \mathrm{wm}}(o_{1:T}|a_{1:T})$ by maximizing the following evidence lower bound (ELBO):
\begin{equation}
\label{eq_rssm}
        \begin{aligned} 
            & \log p_{\scriptscriptstyle \mathrm{wm}}(o_{1:T}|a_{1:T}) = \log \int p_{\scriptscriptstyle \mathrm{wm}}(o_{1:T}, s_{1:T}|a_{1:T})\mathrm{d} s_{1:T} \\
            \geq &\sum\limits_{t=1}^T \mathbb{E}_{q_{\scriptscriptstyle \mathrm{wm}}(s_{1:T}|a_{1:T}, o_{1:T})}[\underbrace{\log p_{\scriptscriptstyle \mathrm{wm}}(o_t|s_{\leq t}, a_{<t})}_{\mathcal{J}_{o}^t} \\
            &\quad -\underbrace{\mathrm{D}_\mathrm{KL}(q_{\scriptscriptstyle \mathrm{wm}}(s_t|s_{<t},a_{<t}, o_t)\Vert p_{\scriptscriptstyle \mathrm{wm}}(s_t|s_{<t}, a_{<t})}_{\mathcal{J}_{kl}^t})].
    \end{aligned}
\end{equation}

In the above objective, the KL-divergence objective $\mathcal{J}_{kl}^t$ enables the model's generation ability by minimizing the distance between the prior and posterior distribution. The reconstruction objective $\mathcal{J}^t_{o}$ enforces the model to capture the visual essence of the task by predicting the subsequent observations, which facilitates the later adaptation.

\paragraph{Self-consistent Model-based Adaptation} Before adaptation, we first pre-train the policy and world model in clean environments. Then we deploy the pre-trained policy and our denoising model into the distracting environment to collect trajectory $\left \{o^n_1, a_1, \cdots, o^n_T, a_T \right \}$. By estimating $p(o_{1:T}|a_{1:T})$ with the pre-trained world model, optimizing Eq.~\ref{eq_distribution_matching} leads to the following self-consistent reconstruction loss $\mathcal{L}^t_{sc}$ and noisy reconstruction loss $\mathcal{L}^t_{n}$. It should be noted that the world model is frozen during adaptation. We choose to drop a similar KL-loss term as in Eq.~\ref{eq_rssm} because we empirically find it to have a negative impact on adaptation by harming the reconstruction results, consistent with previous works~\citep{higgins2017beta,chen2018isolating}. The detailed derivation is provided in Appendix~\ref{appendix_subsec_scma}.
\begin{equation*}
\resizebox{\linewidth}{!}{$
    \displaystyle{
        \begin{aligned} 
        \mathcal{L}^t_{sc} &= -\mathbb{E}_{q(o_{1:T}|o^n_{1:T})}\mathbb{E}_{q_{\scriptscriptstyle \mathrm{wm}}(s_{1:T}|a_{1:T}, o_{1:T})}[\log p_{\scriptscriptstyle \mathrm{wm}}(o_t|s_{\leq t},a_{<t})], \\
        \mathcal{L}^t_{n} &= -\mathbb{E}_{q(o_{1:T}|o^n_{1:T})}[\log q(o^n_{t}|o_t)].
    \end{aligned}
    }
$}  
\end{equation*}

$\mathcal{L}^t_{sc}$ encourages the denoising model to transfer cluttered observations to clean ones so that the transferred observations will conform to the prediction of the world model. $\mathcal{L}^t_{n}$ 
prevents the denoising model from ignoring the cluttered observations and thus outputting clean yet irrelevant observations. In practice, we implement $q(o^n_t|o_t)=\delta(o^n_t - m_{\mathrm{n}}(o_t))$ with a noisy model $m_{\mathrm{n}}$, and $q(o_{1:T}|o^n_{1:T})=\prod_t q(o_t|o^n_t)=\prod_t \delta(o_t - m_{\mathrm{de}}(o^n_t))$ with a denoising model $m_{\mathrm{de}}$.

\subsection{Boosting Adaptation by Reducing Homogeneous Noise Functions}
\label{subsec_homo}

As discussed in Theorem~\ref{theorem_homo}, the solution set of $\mathcal{L}_{\mathrm{KL}}$ equals the set of posterior denoising distributions of noise functions in $\mathcal{H}^p_{f_n}$. To promote the adaptation, we propose two practical techniques to help the denoising distribution $q(o_t|o^n_t)$ converge to the target posterior denoising distribution $p(o_t|o^n_t)$ by reducing the number of homogeneous noise functions.


\paragraph{Leverage Rewards} If reward signals are available in distracting environments, they can naturally boost adaptation by reducing the number of homogeneous noise functions. Loosely speaking, noise functions with the same $p(o^{n}_{1:T}|a_{1:T})$ but different $p(o^{n}_{1:T},r_{1:T}|a_{1:T})$ are no longer homogeneous if rewards are available. A detailed explanation is provided in Appendix~\ref{appendix_subsec_optimality}. The derivation in Sec.~\ref{subsec_mitigating} can be simply extended to include rewards by redefining $\mathcal{L}_{\mathrm{KL}}$ as below (details in Appendix~\ref{appendix_subsec_scma}):
\begin{equation*}
\begin{aligned}
    \mathcal{L}_{\mathrm{KL}} \coloneqq & \mathrm{D}_{\mathrm{KL}} \Big(p(o^n_{1:T},r_{1:T}|a_{1:T})q(o_{1:T}|o^n_{1:T}) \\
    &\quad\quad\quad\quad \big\Vert p(o_{1:T},r_{1:T}|a_{1:T})q(o^n_{1:T}|o_{1:T})\Big),
\end{aligned}
\end{equation*}
which leads to the reward prediction loss:
\begin{equation*}
    \resizebox{\linewidth}{!}{$
        \displaystyle{
            \mathcal{L}^t_{rew} = -\mathbb{E}_{q(o_{1:T}|o^n_{1:T})}\mathbb{E}_{q_{\scriptscriptstyle \mathrm{wm}}(s_{1:T}|a_{1:T}, o_{1:T})}[\log p_{\scriptscriptstyle \mathrm{wm}}(r_t|s_{\leq t},a_{<t})].
        }
    $}
\end{equation*}

$\mathcal{L}^t_{rew}$ encourages the transferred observations to contain sufficient information of rewards and ignore reward-irrelevant distractions. The final adaptation loss of SCMA is:
\begin{equation}
\label{eq_scma_loss}
    \mathcal{L}^t_{\mathrm{SCMA}}= \mathcal{L}^t_{sc} + \mathcal{L}^t_{n} + \mathcal{L}^t_{rew}.
\end{equation}

\paragraph{Limit the Hypothesis Set of the Denoising Model} For specific types of distractions, we can further encode some inductive bias in the denoising model architecture. Therefore, we can prevent $q(o_t|o^n_t)$ from converging to the posterior denoising distributions of certain homogeneous noise functions by limiting the hypothesis set. For example, we can implement the denoising model as a mask model $m_{\mathrm{mask}}: \mathbb{R}^{h\times w\times c}\mapsto[0,1]^{h\times w\times c}$ to handle background distractions. However, to verify the generality of SCMA, we refrain from assuming the type of distractions and implement the denoising model as a generic image-to-image network by default. Detailed implementations are provided in Appendix~\ref{appendix_subsec_implementation}.

In summary, we propose an adaptation framework with a two-stage pipeline: 1) pre-training the policy and world model in clean environments to master skills and capture the environments' latent dynamics $p(o_{1:T}|a_{1:T})$. 2) adapting the policy to visually distracting environments by optimizing $q(o_t|o^n_t)$ with Eq.~\ref{eq_scma_loss} to transfer cluttered trajectories to clean ones. The pipeline is illustrated with Fig.~\ref{fig_scma_algo} along with pseudocode in Appendix~\ref{appendix_subsec_code}.



\begin{table*}[tb]
\subcaptionbox{$\mathrm{video\_hard}$\label{table_video_hard}}[\linewidth]{
    \tablestyle{3.5pt}{1.1}
    \centering
    \footnotesize
    \begin{tabular}{c|cc|cc|ccc|ccc}
        \toprule
        $\mathrm{video\_hard}$ & SCMA & SCMA \scriptsize{(w/o r)} & MoVie & PAD & SVEA & Dr. G & SGQN & TIA & TPC & DreamerPro \\
        
        \midrule

        \text{ball\_in\_cup-catch} &  \textbf{809\scriptsize{$\pm$114}} & 215\scriptsize{$\pm$60} & 41\scriptsize{$\pm$20} & 130\scriptsize{$\pm$47} & 498\scriptsize{$\pm$147} & 635\scriptsize{$\pm$26} & 782\scriptsize{$\pm$57} & 329\scriptsize{$\pm$466} & 220\scriptsize{$\pm$207} & 378\scriptsize{$\pm$231} \\
        
        \text{cartpole-swingup}  &  \textbf{773\scriptsize{$\pm$51}} & 145\scriptsize{$\pm$40} & 83\scriptsize{$\pm$2} & 123\scriptsize{$\pm$21} & 401\scriptsize{$\pm$38} & 545\scriptsize{$\pm$23} & 544\scriptsize{$\pm$43} & 98\scriptsize{$\pm$22} & 219\scriptsize{$\pm$19} & 365\scriptsize{$\pm$48} \\
        
        \text{finger-spin} & \textbf{948$\pm$5} & 769\scriptsize{$\pm$182} & 2\scriptsize{$\pm$0} & 96\scriptsize{$\pm$11} & 307\scriptsize{$\pm$24} & - & 822\scriptsize{$\pm$24} & 146\scriptsize{$\pm$93} & 315\scriptsize{$\pm$40} & 427\scriptsize{$\pm$299}\\

        \text{walker-stand} & \textbf{953$\pm$4} & 328\scriptsize{$\pm$30} & 127\scriptsize{$\pm$23} & 336\scriptsize{$\pm$22} & 747\scriptsize{$\pm$43} & - & 851\scriptsize{$\pm$24} & 117\scriptsize{$\pm$9} & 840\scriptsize{$\pm$98} & 941\scriptsize{$\pm$14} \\

        \text{walker-walk} & 722\scriptsize{$\pm$89}  & 129\scriptsize{$\pm$19} & 39\scriptsize{$\pm$13} & 108\scriptsize{$\pm$33} & 385\scriptsize{$\pm$63} & \textbf{782\scriptsize{$\pm$37}} & 739\scriptsize{$\pm$21} & 84\scriptsize{$\pm$55} & 402\scriptsize{$\pm$57} & 617\scriptsize{$\pm$159} \\
        \bottomrule
    \end{tabular}
}
\\
\vspace{0.2em}
\\
\subcaptionbox{$\mathrm{moving\_view}$\label{table_moving}}[0.4\linewidth]{
    \centering
    \tablestyle{3pt}{1.1}
    \footnotesize
    \begin{tabular}{c|cccc}
    \toprule
    $\mathrm{moving\_view}$ &  SCMA & MoVie & PAD & SGQN \\
    \midrule
    \text{ball\_in\_cup-catch} &  745\scriptsize{$\pm$121} & \textbf{951\scriptsize{$\pm$10}} & 750\scriptsize{$\pm$32} & 857\scriptsize{$\pm$64} \\
    
    \text{cartpole-swingup}  &  708\scriptsize{$\pm$76} & 167\scriptsize{$\pm$25} & 561\scriptsize{$\pm$86} & \textbf{788\scriptsize{$\pm$65}} \\
    
    \text{finger-spin} &  \textbf{952\scriptsize{$\pm$10}} & 896\scriptsize{$\pm$21} & 603\scriptsize{$\pm$28} & 702\scriptsize{$\pm$56} \\
    
    \text{walker-stand} & \textbf{977\scriptsize{$\pm$16}} & 712\scriptsize{$\pm$11} & 955\scriptsize{$\pm$15} & 961\scriptsize{$\pm$2} \\
    
    \text{walker-walk} & \textbf{922\scriptsize{$\pm$55}} & 810\scriptsize{$\pm$7} & 645\scriptsize{$\pm$21} & 769\scriptsize{$\pm$36} \\ 
    \bottomrule
    \end{tabular}
}
\subcaptionbox{$\mathrm{color\_hard}$\label{table_color_hard}}[0.6\linewidth]{
    \centering
    \tablestyle{3pt}{1.1} 
    \footnotesize
    \begin{tabular}{c|ccccc}
    \toprule
    \text{$\mathrm{color\_hard}$} & SCMA & MoVie & PAD & SGQN & SVEA \\
    \midrule
    \text{ball\_in\_cup-catch} & 817\scriptsize{$\pm$64} & 67\scriptsize{$\pm$41} & 563\scriptsize{$\pm$50} & 881\scriptsize{$\pm$61} & \textbf{961\scriptsize{$\pm$7}} \\
    
    \text{cartpole-swingup}  & 809\scriptsize{$\pm$15} & 102\scriptsize{$\pm$14} & 630\scriptsize{$\pm$63} & 773\scriptsize{$\pm$80} & \textbf{837\scriptsize{$\pm$23}} \\
    
    \text{finger-spin} & 965\scriptsize{$\pm$2} & 652\scriptsize{$\pm$10}  & 803\scriptsize{$\pm$72} & 847\scriptsize{$\pm$80} & \textbf{977\scriptsize{$\pm$5}} \\
    
    \text{walker-stand} & \textbf{984\scriptsize{$\pm$11}} & 121\scriptsize{$\pm$14} & 797\scriptsize{$\pm$46} & 867\scriptsize{$\pm$81} & 942\scriptsize{$\pm$26} \\
    
    \text{walker-walk} & \textbf{954\scriptsize{$\pm$7}} & 38\scriptsize{$\pm$3}  &468\scriptsize{$\pm$74} & 828\scriptsize{$\pm$84} & 760\scriptsize{$\pm$145} \\
    \bottomrule
    \end{tabular}
}
\\
\vspace{0.1em}
\\
\subcaptionbox{$\mathrm{occlusion}$\label{table_occlusion}}[0.4\linewidth]{
    \centering
    \tablestyle{3pt}{1.1}
    \footnotesize
    \begin{tabular}{c|cccc}
    \toprule
    \text{$\mathrm{occlusion}$} & SCMA & MoVie & PAD & SGQN \\
    \midrule
    \text{ball\_in\_cup-catch} & \textbf{899\scriptsize{$\pm$41}} & 33\scriptsize{$\pm$18} & 145\scriptsize{$\pm$6} & 642\scriptsize{$\pm$74} \\
    
    \text{cartpole-swingup}  & \textbf{779\scriptsize{$\pm$10}} & 120\scriptsize{$\pm$32} & 142\scriptsize{$\pm$9} & 127\scriptsize{$\pm$18} \\
    
    \text{finger-spin} & \textbf{920\scriptsize{$\pm$1}} & 1\scriptsize{$\pm$0} & 15\scriptsize{$\pm$9} & 117\scriptsize{$\pm$22} \\
    
    \text{walker-stand} & \textbf{976\scriptsize{$\pm$17}} & 124\scriptsize{$\pm$21} & 305\scriptsize{$\pm$16} & 376\scriptsize{$\pm$87} \\
    
    \text{walker-walk} & \textbf{902\scriptsize{$\pm$51}} & 52\scriptsize{$\pm$15} & 94\scriptsize{$\pm$24} & 118\scriptsize{$\pm$34} \\
    \bottomrule
    \end{tabular}
}
\subcaptionbox{Table-top Manipulation tasks in RL-ViGen.\label{table_rlvigen}}[0.6\linewidth]{
    \centering
    \tablestyle{2.5pt}{1.0}
    \footnotesize
    \begin{tabular}{c|ccccc}
    \toprule
    \text{RL-ViGen} & SCMA & SGQN & SRM & SVEA & CURL \\
    \midrule
    Door (easy) & \textbf{416\scriptsize{$\pm$26}} & 391\scriptsize{$\pm$95}  &  337\scriptsize{$\pm$110} & 268\scriptsize{$\pm$136} & 6\scriptsize{$\pm$5} \\
    Door (extreme)  & \textbf{380\scriptsize{$\pm$30}} & 160\scriptsize{$\pm$122} & 31\scriptsize{$\pm$18}   & 62\scriptsize{$\pm$56} & 2\scriptsize{$\pm$1} \\
    \midrule
    Lift (easy) &  19\scriptsize{$\pm$5}   & 31\scriptsize{$\pm$17} & \textbf{69\scriptsize{$\pm$32}}   & 43\scriptsize{$\pm$18} & 0\scriptsize{$\pm$0} \\ 
    Lift (extreme) & \textbf{15\scriptsize{$\pm$9}}   & 7\scriptsize{$\pm$7} & 0\scriptsize{$\pm$0} & 8\scriptsize{$\pm$5} & 0\scriptsize{$\pm$0} \\
    \midrule
    TwoArm (easy) & 340\scriptsize{$\pm$27} & 349\scriptsize{$\pm$23} & \textbf{419\scriptsize{$\pm$45}}  & 414\scriptsize{$\pm$58} & 150\scriptsize{$\pm$20} \\
    TwoArm (extreme) & 227\scriptsize{$\pm$24} & \textbf{257\scriptsize{$\pm$31}} & 161\scriptsize{$\pm$27} & 155\scriptsize{$\pm$18}  & 147\scriptsize{$\pm$15} \\
    \bottomrule
    \end{tabular}
}
\vspace{-0.5em}
\caption{Performance (mean $\pm$ std) in visually distracting environments. We report the performance of SCMA and baseline methods in DMControl and RL-ViGen across various distracting settings. The best algorithm is \textbf{bold} for every task.}
\label{table_dmc_all}
\vspace{-1em}
\end{table*}

\begin{figure*}[t]
    \centering
    \centerline{\includegraphics[width=0.8\linewidth]{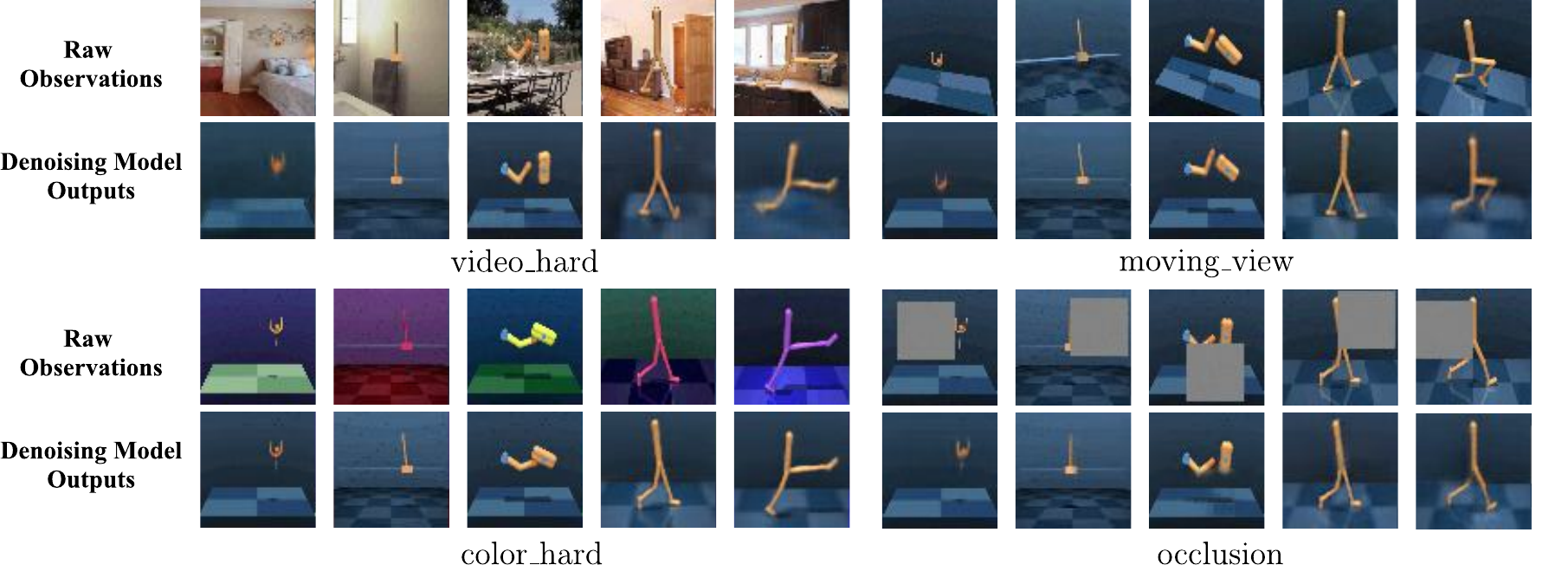}}
    \vspace{-0.5em}
    \caption{Visualization of the raw observations and the denoising model's outputs in various distracting environments.}
    \label{fig_denoising_visualization}
    \vspace{-1em}
\end{figure*}

\section{Experiment}
\label{sec_exp}

In this section, we evaluate the capability of SCMA by addressing the following questions:
\begin{itemize}
    \item Can SCMA fill the performance gap caused by various types of distractions?
    \item Can SCMA generalize across various tasks or policies from different algorithms?
    \item How does each loss component contribute to the results? Can SCMA still handle distractions without rewards?
    \item Can SCMA converge faster compared to other adaptation-based methods or directly training from scratch in visually distracting environments?
\end{itemize}


\subsection{Experiment Setup}
\paragraph{Environments} To measure the effectiveness of SCMA, we follow the settings from the commonly adopted DMControlGB~\citep{hansen2021generalization,hansen2021stabilizing,Bertoin2022LookWY}, DMControlView~\citep{yang2024movie} and RL-ViGen~\citep{yuan2024rl,chen2024focus}. The agent is asked to perform continuous control tasks in visually distracting environments, including video distracting background ($\mathrm{video\_hard}$), moving camera views ($\mathrm{moving\_view}$), and randomized colors ($\mathrm{color\_hard}$). We also evaluate the agent's performance in a more challenging $\mathrm{occlusion}$ setting by randomly masking $1/4$ of each observation. We provide a visualization of every distracting environment in Fig.~\ref{fig_overview_dmc} in the Appendix. Unless otherwise stated, the result of each task is evaluated over $3$ seeds and we report the average performance of the policy in the last episode.

\paragraph{Baselines} We compare SCMA to the state-of-the-art adaptation-based baselines: PAD~\citep{Hansen2020SelfSupervisedPA}, MoVie~\citep{yang2024movie}. We also include comparison with other kinds of methods, including augmentation-based methods: SVEA~\citep{hansen2021stabilizing}, SGQN~\citep{Bertoin2022LookWY}, Dr. G~\citep{ha2023dream}; and task-induced methods: TIA~\citep{fu2021learning}, TPC~\citep{nguyen2021temporal}, DreamerPro~\citep{deng2022dreamerpro}. Following the official design~\citep{hansen2021generalization}, the augmentation-based methods use random overlay with images from Place365~\citep{zhou2017places}. 
Task-induced methods directly learn the structured representations in distracting environments. Adaptation-based methods will first be pre-trained in the clean environments for $1$M timesteps and then adapt to the distracting environments for $0.1$M timesteps ($0.4$M for $\mathrm{video\_hard}$ and $0.5$M for RL-ViGen). By default, SCMA adapts a pre-trained Dreamer policy~\citep{hafner2019dream} to distracting environments. More details can be found in Appendix~\ref{appendix_baseline}.

\subsection{Adaptability to Visual Distractions}
We first evaluate the adaptation ability of SCMA by measuring its performance in the challenging visual generalization benchmarks. Before adapting to the visually distracting environments, we first pre-train the policy and world model in the clean training environment (see Fig.~\ref{fig_env_raw} in the Appendix). Then we adapt the agent to visually distracting environments leveraging the pre-trained world model. The experiment results in Table~\ref{table_dmc_all} show that SCMA significantly reduces the performance gap caused by distractions and achieves appealing performance compared to augmentation-based methods. While remaining competitive in the $\mathrm{color\_hard}$ setting, SCMA outperforms the best baseline method in most tasks in other $3$ settings. Moreover, SCMA obtains the best performance in all tasks in the $\mathrm{occlusion}$ setting, which is a common scenario for real-world robot controls. To verify the idea of boosting adaptation by reducing the hypothesis set (Sec.~\ref{subsec_homo}), we implement the denoising model with specific architectures and conduct experiments in the table-top manipulation tasks with distracting settings from RL-ViGen. Further details are included in Appendix~\ref{appendix_subsec_implementation}. Following previous work \citep{yuan2024rl}, we report the scores under the $\mathrm{eval\_easy}$ and $\mathrm{eval\_extreme}$ settings in Table~\ref{table_rlvigen}. The results show that SCMA achieves the best performance in half of the scenarios and remains comparable to other methods in the remaining ones. We believe one way to further improve the performance is to incorporate stronger world models~\citep{ding2024diffusionworldmodelfuture}, which we leave to future works.

We visualize how $m_\mathrm{de}$ transfers cluttered observations to clean ones in Fig.~\ref{fig_denoising_visualization}, where it effectively mitigates various types of distractions and restores the task-relevant objects correctly. The qualitative results also indicate that our method can effectively handle distractions not only for large embodiments like $\mathrm{walker}\text{-}\mathrm{walk}$, but also for challenging small embodiments such as $\mathrm{ball\_in\_cup}\text{-}\mathrm{catch}$ and $\mathrm{cartpole}\text{-}\mathrm{swingup}$, which task-induced methods often fail to manage. 

\begin{table}[tb]
    \centering
    \footnotesize
    \begin{tabular}{c|cc}
        \toprule
        $\mathrm{occlusion}$ & SGQN & SGQN\scriptsize{$+$SCMA} \\
        \midrule
        $\mathrm{ball\_in\_cup\text{-}catch}$ & 642\scriptsize{$\pm$74} & 775\scriptsize{$\pm$151} \\
        $\mathrm{cartpole\text{-}swingup}$ & 127\scriptsize{$\pm$18} & 337\scriptsize{$\pm$51} \\
        $\mathrm{finger\text{-}spin}$ & 117\scriptsize{$\pm$22} & 133\scriptsize{$\pm$19} \\
        $\mathrm{walker\text{-}stand}$ & 376\scriptsize{$\pm$87} & 884\scriptsize{$\pm$63} \\
        $\mathrm{walker\text{-}walk}$ & 118\scriptsize{$\pm$34} &  465\scriptsize{$\pm$101} \\
        \midrule
        Averaged & 276.0 & 518.8 \scriptsize{(88.0\%$\uparrow$)}\\
        \bottomrule
    \end{tabular}
    \vspace{-0.5em}
    \caption{Performance (mean $\pm$ std) in $\mathrm{occlusion}$ environment. The results show that the denoising model can boost SGQN's performance in a plug-and-play manner.}
    \vspace{-1.0em}
    \label{table_sgqn_scma}
\end{table}

\subsection{Versatility of the Denoising Model}

We conduct experiments to measure the versatility of the denoising model from two aspects: 1) can the denoising model generalize across tasks with the same robot? 2) is the denoising model applicable to policies from different algorithms? To answer the above questions, we first cross-evaluate the capability of the denoising model between $\mathrm{walker\text{-}walk}$ and $\mathrm{walker\text{-}stand}$ in the $\mathrm{video\_hard}$ environment. Specifically, we take the denoising model adapted to one task and directly evaluate its performance in another task. The results in Table~\ref{table_cross_task} in the Appendix indicate that the achieved denoising model is not restricted to a specific task and exhibits appealing zero-shot generalization capability. To verify that the denoising model is agnostic to policies, we first optimize the denoising model with trajectories collected by a Dreamer policy. Then we combine the obtained denoising model with an SGQN policy in a plug-and-play manner and measure the performance in the $\mathrm{occlusion}$ setting. While SGQN reaches appealing results in other settings, it performs poorly under occlusions. However, Table~\ref{table_sgqn_scma} demonstrates that incorporating the denoising model can improve the performance of SGQN by $88\%$. Therefore, SCMA can serve as a convenient component to promote performance under certain distractions without modifying the policy. However, there is a disparity between the performance of SGQN policy with SCMA and Dreamer policy with SCMA, which we attribute to the policy encoder. Since the encoder of Dreamer policy leverages the long-term representation extracted by the world model, it is less susceptible to small mistakes made by the denoising model.

\subsection{Adaptation Without Rewards}

While SCMA utilizes both visual and reward signals for the best adaptation results, the ability to adapt without rewards is also important. To address this issue, we conduct experiments in the $\mathrm{video\_hard}$ environments to investigate how different loss components affect the final adaptation results.

To better understand the impact of different losses, we separately removed the $3$ loss components from SCMA during adaptation, namely self-consistent reconstruction loss $\mathcal{L}^t_{sc}$, reward prediction loss $\mathcal{L}^t_{rew}$, and noisy reconstruction loss $\mathcal{L}^t_{n}$. From the ablation results in Fig.~\ref{fig_different_loss} in the Appendix, we can see that removing the self-consistent reconstruction loss leads to the most significant decrease, indicating that proposed $\mathcal{L}^t_{sc}$ plays a crucial role in adaptation. Another finding is that the reward loss can promote better adaptation by encouraging the denoising model to focus on some miniature yet critical features, such as the ball in $\mathrm{ball\_in\_cup}\text{-}\mathrm{catch}$ and the pole in $\mathrm{cartpole}\text{-}\mathrm{balance}$. While $\mathcal{L}^t_{rew}$ contributes considerably to the final adaptation results, Fig.~\ref{fig_average_dmc} in the Appendix demonstrates that SCMA without rewards still achieves the highest average performance among all adaptation-based methods. The noisy reconstruction loss mainly aims to preserve the connection between the cluttered and transferred observations. Intuitively, removing $\mathcal{L}^t_{n}$ from Eq.~\ref{eq_distribution_matching} will cause a \textit{mode-seeking} problem~\citep{cheng1995mean}, where the denoising model will prefer the mode of $\log p(o_{1:T}|a_{1:T})$ and thus transfer cluttered observations to clean yet irrelevant observations.

\begin{figure}[t]
\centering
\begin{subfigure}[b]{\linewidth}
    \centering
    \includegraphics[width=\linewidth]{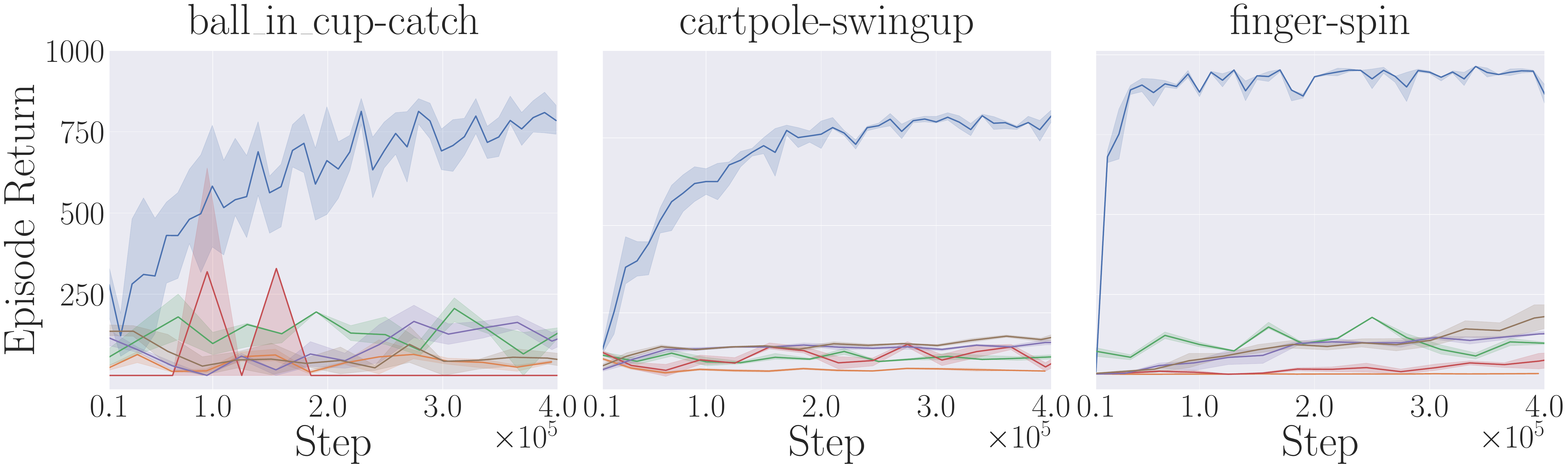}
\end{subfigure}
\begin{subfigure}[b]{0.7\linewidth}
    \centering
    \includegraphics[width=\linewidth]{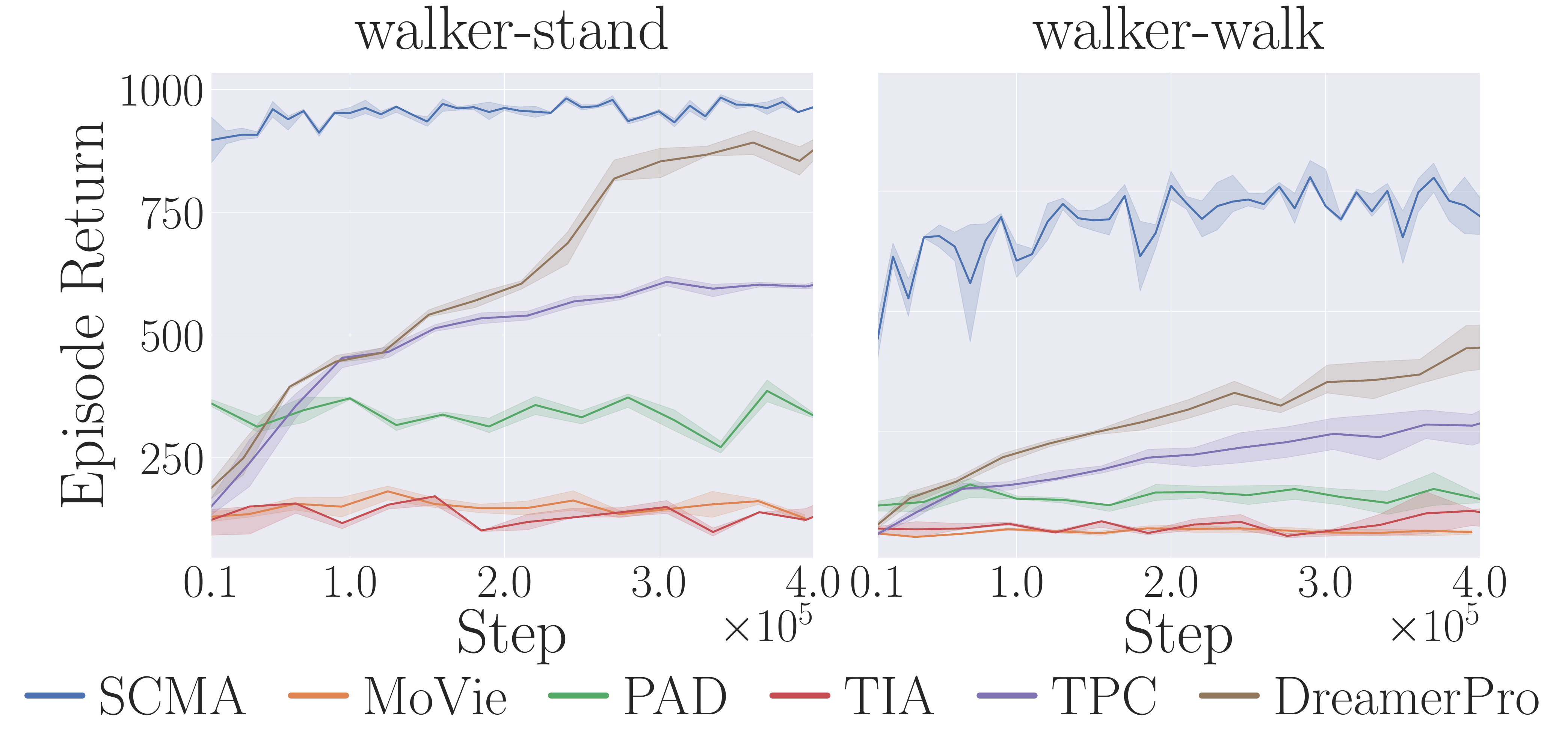}
\end{subfigure}
\vspace{-0.8em}
\caption{Performance curves of different algorithms in the $\mathrm{video\_hard}$ environment, where SCMA exhibits better final performance and sample efficiency.}
\label{fig_video_hard_adapt}
\vspace{-1em}
\end{figure}

\subsection{Sample Efficiency in Visually Distracting Environments}

Accomplishing tasks with as few cluttered observations as possible is practically important to deploy the agent in distracting environments. Compared with other adaptation-based methods or training from scratch with task-induced methods~\citep{fu2021learning,deng2022dreamerpro}, the performance curves in Fig.~\ref{fig_video_hard_adapt} show that SCMA can achieve higher performance with much fewer downstream cluttered samples. Although we adapt the policy in $\mathrm{video\_hard}$ with $0.4$M steps, SCMA can achieve competitive performance with much fewer steps. We provide the wall clock time and adaptation steps for SCMA to reach $90\%$ of the final performance in Table~\ref{tabale_video_hard_wall_time} in the Appendix to show that SCMA obtains compelling results with only $10\%$ of total adaptation time-steps for most tasks.

\subsection{Real-world Robot Data}

With the rapid development of generative models, their potential to enhance real-world robotic controls has attracted significant attention. Recent works leverage video models to create future observations based on current environment observation and extract executable action sequences with inverse dynamics models (IDM)~\citep{Du2023LearningUP,Ko2023LearningTA}. However, the generated observations might still contain distractions if the input environment observation is cluttered, which imposes challenges to the IDM in making accurate action predictions. We show that SCMA can help IDM better predict the actions when handling cluttered observations. More details are included in Appendix~\ref{appendix_subsec_robot_setting}.

We manually collect real-world robot data with a Mobile ALOHA robot by performing an apple-grasping task with teleoperation. The IDM is trained with data collected in the normal setting and evaluated on data collected in $3$ distracting settings: 1) $\mathrm{fruit\_bg}$: various fruits are placed in the background. 2) $\mathrm{color\_bg}$: the scene is disrupted by a blue light. 3) $\mathrm{varying\_light}$: the lighting condition is intentionally changed. We provide the quantitative results in Table~\ref{table_robot_scma} in the Appendix and visualization in Fig.~\ref{fig_robot_visualization}. The results show that SCMA can effectively mitigate real-world distractions and thus has important implications for the practical deployment of robots.

\begin{figure}
    \centering
    \includegraphics[width=0.7\linewidth]{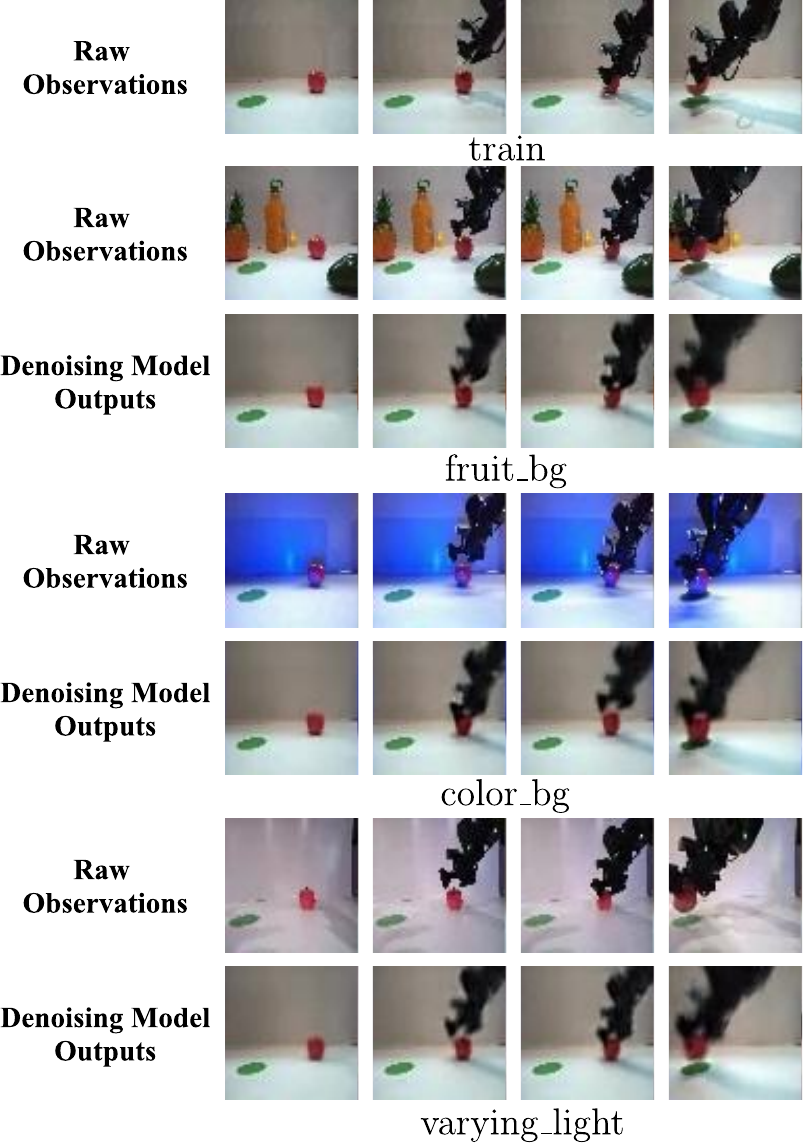}
    \vspace{-0.5em}
    \caption{Visualization of the raw observations and denoising model's outputs on real-world robot data.}
    \vspace{-1.2em}
    \label{fig_robot_visualization}
\end{figure}

\section{Conclusion and Discussion}
\label{sec_conclusion}


The ability to generalize across environments with various distractions is a long-standing goal in visual RL. In this work, we formalize the challenge as an unsupervised transferring problem and propose a novel method called self-consistent model-based adaptation (SCMA). SCMA adopts a policy-agnostic denoising model to mitigate distractions by transferring cluttered observations into clean ones. To optimize the denoising model in the absence of paired data, we propose an unsupervised distribution matching objective that regularizes the outputs of the denoising model to follow the distribution of clean observations, which can be estimated with a pre-trained world model. Experiments in challenging visual generalization benchmarks show that SCMA effectively reduces the performance gap caused by distractions and can boost the performance of various policies in a plug-and-play manner. Moreover, we validate the effectiveness of SCMA with real-world robot data, where SCMA effectively mitigates distractions and promotes better action predictions.

SCMA proposes a general model-based objective for adaptation under distractions, and we wish to further promote this direction by highlighting some limitations and future improvements. SCMA  pre-trains world models to estimate the action-conditioned distribution of clean observations. Including stronger world models like diffusion models~\citep{wang2023drivedreamer} may be a promising way to further promote the performance with complex robots or real-world tasks. Another potential improvement is to explore other types of signals that are invariant between clean and distracting environments, e.g. 3D structures of robots~\citep{driess2022reinforcement} or natural language description of tasks~\citep{Sumers2023DistillingIV}.

\newpage
\appendix
\section*{Ethical Statement}

The ability to neglect distractions is a prerequisite for the real-world application of visual reinforcement learning policies. This work aims to boost the visual robustness of learned agents through test-time adaptation with pre-trained world models, which might facilitate the deployment of intelligent agents. 
There are no serious ethical issues as it is basic research on reinforcement learning. We hope our work can inspire future research on designing robust agents under visual distractions.

\bibliographystyle{named}
\bibliography{ijcai25}

\newpage
\section{Theoretical Analyses}
In this section, we will provide proof of all our theoretical results in detail.

\subsection{Noisy Partially-Observed Markov Decision Process}
\label{appendix_subsec_npomdp}
For NPOMDP $\mathcal{M}_n = \langle \mathcal{S}, \mathcal{O}, \mathcal{A}, \mathcal{T}, \mathcal{R}, \gamma, \rho_0, f_n\rangle$, the action-conditioned joint distribution is defined as following:
\begin{equation*}
\resizebox{.95\linewidth}{!}{$
            \displaystyle
            \begin{aligned} 
                &p(o_{1:T}, o^{n}_{1:T}, r_{1:T}|a_{1:T}) \coloneqq \\
                &\int\prod\limits_{t=1}^{T} p(o^n_t|o_t) p(o_{t}|s_{\leq t}, a_{<t})p(r_{t}|s_{\leq t}, a_{<t})p(s_{t}|s_{<t},a_{<t})\mathrm{d}s_{1:T},
            \end{aligned} 
$}
\end{equation*}
where $p(o^n_t|o_t)=\delta(o^n_t-f_n(o_t))$ is the noising distribution with $\delta(\cdot)$ being the Dirac delta function. It should be noted that all the noisy/denoising distributions are assumed to be independent and identically distributed, i.e. $p(o^n_{1:T}|o_{1:T})=\prod_t p(o^n_t|o_t)$, $ q(o_{1:T}|o^n_{1:T})=\prod_t q(o_t|o^n_t)$. Leveraging the Bayes' rule, we have that:
\begin{equation*}
    p(o_t|o^n_t) = \frac{p(o^n_t|o_t)p(o_t)}{\sum_{o'_t}p(o^n_t|o'_t)p(o'_t)}.
\end{equation*}

In the theoretical analysis, we assume $f_n$ is an injective function. We will next explain why this is a reasonable assumption in practical visual generalization settings. Following previous works~\citep{fu2021learning,Bertoin2022LookWY,yuan2022pre,yuan2024rl}, visual generalization involves variations in task-irrelevant factors, such as colors, backgrounds, and lighting conditions. However, task-relevant factors remain untouched, such as the robot's pose. Therefore, the noise function $f_n$ should not map two different clean observations into the same cluttered observation, i.e., $f_n$ is injective.

For the simplicity of the following derivations, we now redefine the observation spaces of clean and cluttered observations separately:
\begin{equation*}
    \begin{aligned}
        \mathcal{O}^c &= \left \{o  |o\in \mathcal{O};\exists t, p(o_t=o|a_{1:T})>0\right \}\\
        \mathcal{O}^n &= \left \{o  |o \in \mathcal{O};\exists t, p(o_t=o|a_{1:T})>0 \right \}.
    \end{aligned}
\end{equation*}

Generally speaking, $\mathcal{O}^c$ and $\mathcal{O}^n$ only contain observations that might occur. By redefining $f_n: \mathcal{O^c}\mapsto \mathcal{O}^n$, $f_n$ is now an bijective function. We denote the inverse of $f_n$ as $f_n^{-1}(o^n_t)= f^{-1}_n(f_n(o_t)) = o_t$.

With $p(o^n_t|o_t)=\delta(o^n_t-f_n(o_t))$, we can show that:
\begin{equation*}
    p(o_t|o^n_t) =\left \{ \begin{aligned} 1, & \, o_t = f^{-1}_n(o^n_t) \\ 0, & \, \text{otherwise}\end{aligned} \right . ,
\end{equation*}
which means the posterior denoising distribution $p(o_t|o^n_t)$ is also a Dirac distribution, i.e. $p(o_t|o^n_t)=\delta(o_t-f^{-1}_n(o^n_t))$.

\subsection{Mitigate Distractions with Unsupervised Distribution Matching}
\label{appendix_subsec_distribution}

\paragraph{Homogeneous Noise Function} From the definition of homogeneous noise functions provided (Def.~\ref{def_homo_noise}), we can show that homogeneous noise functions are theoretically indistinguishable in the unsupervised setting. Without loss of generality, we only consider two random variables $(o, o^n)$, omitting the time subscript $t$ and action condition $a_{1:T}$. Given a clean marginal distribution $p(o)$, the noise function $f_n$ specifies the conditional distribution $p(o^n|o)=\delta(o^n-f_n(o))$, which in turn defines a corresponding joint distribution $p(o^n,o)$ and cluttered marginal distribution $p(o^n)$. We then define $\mathcal{H}_{f_n}^p = \left \{ f_{n_i} | f_{n_i}\equiv_p f_{n}\right \}$ to be the set of homogeneous noise functions of $f_n$ under $p(o)$.
For homogeneous noise functions, they all share the same marginal distributions $p(o)$ and $p(o^n)$ but have different conditional distributions $p(o^n|o)$ and joint distributions $p(o^n,o)$. In the unsupervised setting, we can only collect samples to estimate $p(o)$ and $p(o^n)$ separately, which makes it impossible to distinguish between joint distributions with the same marginal distributions. Therefore, it is impossible to distinguish between homogeneous noise functions unless leveraging additional assumptions.



\paragraph{Unsupervised Distribution Matching}

Due to the lack of paired data between clean and cluttered observations, we address this challenge with unsupervised distribution matching. An important insight is that although distracting environments have unknown visual variations, the task-relevant objects still follow the same latent dynamics as in clean environments. Specifically, given action sequences $a_{1:T}$, we can collect cluttered observations $o^n_{1:T}$ from the distracting environments (i.e. $p(o^n_{1:T}|a_{1:T})$. The corresponding clean observations $o_{1:T}$, although unobservable, still follow $p(o_{1:T}|a_{1:T})$ as in clean environments. Compared to traditional unsupervised transferring in computer vision that operates on static datasets, we can obtain a certain level of control over the distribution of observations by selecting specific action sequences.

Therefore, a natural way to optimize the denoising model is to align the distribution between the transferred observations and the clean observations:
\begin{equation*}
\begin{aligned}
    \mathcal{L}'_{\mathrm{KL}} = \mathrm{D}_{\mathrm{KL}}\left (\mathbb{E}_{p(o^n_{1:T}|a_{1:T})}[q(o_{1:T}|o^n_{1:T})] \big\Vert p(o_{1:T}|a_{1:T})\right ).
\end{aligned}
\end{equation*}

However, $\mathcal{L}'_{\mathrm{KL}}$ is not directly optimizable as it is non-trivial to estimate the distribution of transferred observations $\mathbb{E}_{p(o^n_{1:T}|a_{1:T})}[q(o_{1:T}|o^n_{1:T})]$.

To address this problem, we additionally introduce a learnable noisy distribution $q(o^n_t|o_t)$ and extend $\mathcal{L}'_{\mathrm{KL}}$ to the following $\mathcal{L}^{\mathrm{joint}}_{\mathrm{KL}}$:
\begin{equation}
\begin{aligned}
    \mathcal{L}_{\mathrm{KL}} =& \mathrm{D}_{\mathrm{KL}} \Big(p(o^n_{1:T}|a_{1:T})q(o_{1:T}|o^n_{1:T}) \\
    &\quad\quad\quad\quad\quad\quad \big\Vert p(o_{1:T}|a_{1:T})q(o^n_{1:T}|o_{1:T})\Big).
\end{aligned}
\end{equation}

With $q(o^n_{1:T}|o_{1:T})$ being optimizable, we demonstrate that $\mathcal{L}_{\mathrm{KL}}$ is equivalent to $\mathcal{L}'_{\mathrm{KL}}$ in the sense that the optimal denoising distribution $q^*(o_{1:T}|o^n_{1:T})$ is identical for both objectives.
\begin{equation}
    \mathop{\arg\min}_{q(o_{1:T}|o^n_{1:T})}\; \min_{q(o^n_{1:T}|o_{1:T})}\mathcal{L}_{\mathrm{KL}}= \mathop{\arg\min}_{q(o_{1:T}|o^n_{1:T})}\; \mathcal{L}'_{\mathrm{KL}}.
\end{equation}

\begin{proof}
Let $q^1(o_{1:T}|o^n_{1:T}) \coloneqq \mathop{\arg\min}\limits_{q(o_{1:T}|o^n_{1:T})}\; \min\limits_{q(o^n_{1:T}|o_{1:T})}\mathcal{L}_{\mathrm{KL}}$, $q^2(o_{1:T}|o^n_{1:T}) \coloneqq \mathop{\arg\min}\limits_{q(o_{1:T}|o^n_{1:T})}\; \mathcal{L}'_{\mathrm{KL}}$. The goal is to show that $q^1(o_{1:T}|o^n_{1:T})$ also minimizes $\mathcal{L}'_{\mathrm{KL}}$ and $q^2(o_{1:T}|o^n_{1:T})$ also minimizes $\mathcal{L}_{\mathrm{KL}}$.

It is easy to show that $q^1(o_{1:T}|o^n_{1:T})$ minimizes $\mathcal{L}'_{\mathrm{KL}}$. According to the properties of KL-divergence, $q^1(o_{1:T}|o^n_{1:T})$ reaches the minimum point of $\mathcal{L}'_{\mathrm{KL}}$ if and only if $p(o^n_{1:T}|a_{1:T})q^1(o_{1:T}|o^n_{1:T})=p(o_{1:T}|a_{1:T})q(o^n_{1:T}|o_{1:T})$. Therefore, $\mathbb{E}_{p(o^n_{1:T}|a_{1:T})}[q^1(o_{1:T}|o^n_{1:T})=p(o_{1:T}|a_{1:T})$, which means $\mathcal{L}'_{\mathrm{KL}}=0$.

\vspace{0.1em}

To show that $q^2(o_{1:T}|o^n_{1:T})$ also minimizes $\mathcal{L}_{\mathrm{KL}}$, we only need to show that the following expression is a valid distribution:
$$\frac{p(o^n_{1:T}|a_{1:T})q^2(o_{1:T}|o^n_{1:T})}{p(o_{1:T}|a_{1:T})}.$$

Since $q^2(o_{1:T}|o^n_{1:T})$ minimizes $\mathcal{L}'_{\mathrm{KL}}$, it follows that $\mathbb{E}_{p(o^n_{1:T}|a_{1:T})}[q^2(o_{1:T}|o^n_{1:T})] = p(o_{1:T}|a_{1:T})$. Therefore, we have:

\resizebox{0.95\linewidth}{!}{$
    \displaystyle{
    \sum\limits_{o^n_{1:T}}\frac{p(o^n_{1:T}|a_{1:T})q^2(o_{1:T}|o^n_{1:T})}{p(o_{1:T}|a_{1:T})} = \frac{p(o_{1:T}|a_{1:T})}{p(o_{1:T}|a_{1:T})}=1.
    }
$}

Letting $q(o^n_{1:T}|o_{1:T})=\frac{p(o^n_{1:T}|a_{1:T})q^2(o_{1:T}|o^n_{1:T})}{p(o_{1:T}|a_{1:T})}$, it is easy to see that $\mathcal{L}_{\mathrm{KL}}=0$.
Thus, we have proven that $\mathcal{L}'_{\mathrm{KL}}$ and $\mathcal{L}_{\mathrm{KL}}$ share the same optimal denoising distribution $q^*(o_{1:T}|a_{1:T})$.
\end{proof}

To simplify the calculation, we further show that $\mathcal{L}_{\mathrm{KL}}$ can be derived into the following objective:
\begin{equation}
\label{eq_adaptation_raw_1}
\resizebox{\linewidth}{!}{$
    \displaystyle{
        \begin{aligned}
        \mathcal{L}_{\mathrm{KL}} &= \mathrm{D}_{\mathrm{KL}}\Big(p(o^n_{1:T}|a_{1:T})q(o_{1:T}|o^n_{1:T}) \\
        &\quad\quad\quad\quad\quad\quad \big\Vert p(o_{1:T}|a_{1:T})q(o^n_{1:T}|o_{1:T})\Big)\\
        &= -H\big (p(o^n_{1:T}|a_{1:T})\big) + \mathbb{E}_{p(o^n_{1:T}|a_{1:T})q(o_{1:T}|o^n_{1:T})}\big [ \\
        &\quad\quad \log q(o_{1:T}|o^n_{1:T}) - \log p(o_{1:T}|a_{1:T}) - \log q(o^n_{1:T}|o_{1:T}) \big ] \\
        &\stackrel{(*)}{\approx} \mathbb{E}_{p(o^n_{1:T}|a_{1:T})}\mathbb{E}_{q(o_{1:T}|o^n_{1:T})} \Big [ -\log p(o_{1:T}|a_{1:T}) \\
        &\quad\quad-\log q(o^n_{1:T}|o_{1:T}) \Big] - \underbrace{H\big (p(o^n_{1:T}|a_{1:T})\big)}_{\text{constant}}.
        \end{aligned}
    }
$}
\end{equation}
In $(*)$, as $q(o_{1:T}|o^n_{1:T})$ is a Dirac distribution, i.e. $q(o_{1:T}|o^n_{1:T})=\prod_{t}q(o_t|o^n_t)=\prod_{t}\delta(o_t-m_{\mathrm{de}}(o_t^n))$, we have
\begin{equation*}
\begin{aligned}
    & \mathbb{E}_{p(o^n_{1:T}|a_{1:T})q(o_{1:T}|o^n_{1:T})}\big [ \log q(o_{1:T}|o^n_{1:T}) \big ] \\
    = &\mathbb{E}_{p(o^n_{1:T}|a_{1:T})}\bigg[ \prod_{t}\delta(o_t-m_{\mathrm{de}}(o_t^n)) \log \delta(o_t-m_{\mathrm{de}}(o_t^n)) \bigg] \\
    =& 0.
\end{aligned}
\end{equation*}
Moreover, $-H\big (p(o^n_{1:T}|a_{1:T})\big)$ in the above objective is a constant, which is denoted as $C$ in the manuscript. 

\subsection{Optimality Analysis}
\label{appendix_subsec_optimality}
We provide detailed proof of Theorem~\ref{theorem_homo} in this section. For the notation simplicity, we only consider two random variables $(o,o^n)$, omitting the time subscript $t$ and action condition $a_{1:T}$. 

Given a clean marginal distribution $p(o)$ and noise function $f_n$, it defines the joint distribution $p(o^n,o)=p(o)p_n(o^n|o)$ and cluttered marginal distribution $p_n(o^n)=\sum_op(o^n,o)$. Leveraging the Bayes' rule, the posterior distribution $p(o|o^n)$ is also defined as $p(o|o^n)=\frac{p(o^n,o)}{p_n(o^n)}$. While we sometimes simplify $p_{o^n}(o^n)=p_n(o^n)$ as $p(o^n)$, we explicitly preserve the subscript to avoid confusion here.

Following the redefinition in Appendix~\ref{appendix_subsec_npomdp}, we have that $p(o)>0, p_n(o^n)>0, \forall o\in \mathcal{O}^c, o^n\in\mathcal{O}^n$. The redefinition ensures that only observations that might occur are considered. Assume that $|\mathcal{O}^c|=|\mathcal{O}^n|=N$, an critical insight is that $p_n(o_n)$ is a permutation of $p(o)$. Specifically, let $P\coloneqq[ p(o_1),\cdots,p(o_N)]$, $P^n\coloneqq[ p_n(o^n_1),\cdots,p_n(o^n_N)]$, we can transform $P$ to $P^n$ with permutation. This is easy to verify as we have $p(o)=p_n(f_n(o)),\forall o \in \mathcal{O}^c$.

We denote the optimal denoising distribution $q^*(o|o^n)$ and optimal noising distribution $q^*(o^n|o)$ as:
\begin{equation*}
    (q^*(o|o^n), q^*(o^n|o)) = \mathop{\arg\min}\limits_{q(o|o^n),q(o^n|o)} \mathcal{L}_{\mathrm{KL}}.
\end{equation*}

As we implement $q(o|o^n)=\delta(o-m_{\mathrm{de}}(o^n))$, $q(o^n|o)=\delta(o^n-m_{\mathrm{n}}(o))$. $q^*(o|o^n), q^*(o^n|o)$ are also constrained to be Dirac distributions. It should be noted that this constraint does not affect the minimum of $\mathcal{L}_{\mathrm{KL}}$ as we can set $q(o^n|o)=p(o^n|o),q(o|o^n)=p(o|o^n)$ so that $\mathcal{L}_{\mathrm{KL}}=0$.

To prove Theorem~\ref{theorem_homo}, we need to prove: 1) any optimal denoising distribution $q^*(o|o^n)$ is the posterior denoising distribution of a noise function that is homogeneous to $f_n$. 2) the posterior denoising distribution of a noise function that is homogeneous to $f_n$ minimizes $\mathcal{L}_{\mathrm{KL}}$.

\begin{lemma}
\label{lemma_injective}
    The optimal denoising/noising distribution is Dirac distribution with an injective function, i.e. $q^*(o|o^n)=\delta(o-m^*_{\mathrm{de}}(o^n))$, $q^*(o^n|o)=\delta(o^n-m^*_n(o))$, where $m^*_{\mathrm{de}}$ and $m^*_n$ are injective functions.
\end{lemma}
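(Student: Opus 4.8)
The plan is to exploit the fact that, within the Dirac-constrained class, the minimum of $\mathcal{L}_{\mathrm{KL}}$ is still zero, and then to read off injectivity directly from the equality of joint distributions that a vanishing KL-divergence forces. First I would recall why the minimum is zero inside the Dirac class: since $f_n:\mathcal{O}^c\mapsto\mathcal{O}^n$ is bijective on the redefined spaces (Appendix~\ref{appendix_subsec_npomdp}), the true posterior and noising distributions $p(o|o^n)=\delta(o-f_n^{-1}(o^n))$ and $p(o^n|o)=\delta(o^n-f_n(o))$ are already Dirac, and substituting them makes the two arguments of the divergence equal the same joint $p(o)p(o^n|o)=p_n(o^n)p(o|o^n)$, so $\mathcal{L}_{\mathrm{KL}}=0$. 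Because the KL-divergence is non-negative, any optimal pair $(q^*(o|o^n),q^*(o^n|o))$ must also attain value zero.

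Next I would invoke the equality characterization of the KL-divergence: attaining $\mathcal{L}_{\mathrm{KL}}=0$ forces the two joints to coincide as measures on $\mathcal{O}^c\times\mathcal{O}^n$,
\[
 p_n(o^n)\,q^*(o|o^n) \;=\; p(o)\,q^*(o^n|o) \qquad \text{for all } (o,o^n).
\]
Substituting the Dirac forms $q^*(o|o^n)=\delta(o-m^*_{\mathrm{de}}(o^n))$ and $q^*(o^n|o)=\delta(o^n-m^*_n(o))$ turns this into a pointwise identity of point masses: the left side places mass $p_n(o^n)$ at each $(m^*_{\mathrm{de}}(o^n),o^n)$, while the right side places mass $p(o)$ at each $(o,m^*_n(o))$.

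Then I would prove injectivity of $m^*_{\mathrm{de}}$ by contradiction. Suppose $m^*_{\mathrm{de}}(o^n_1)=m^*_{\mathrm{de}}(o^n_2)=o$ with $o^n_1\neq o^n_2$. Evaluating the identity at $(o,o^n_1)$ gives a left-hand mass $p_n(o^n_1)>0$, which forces $m^*_n(o)=o^n_1$ on the right; evaluating at $(o,o^n_2)$ likewise forces $m^*_n(o)=o^n_2$, contradicting that $m^*_n$ is a function. The symmetric argument, evaluating at $(o_1,o^n)$ and $(o_2,o^n)$ under the assumption $m^*_n(o_1)=m^*_n(o_2)=o^n$, yields injectivity of $m^*_n$. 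Both steps crucially use the redefinition in Appendix~\ref{appendix_subsec_npomdp} that $p(o)>0$ and $p_n(o^n)>0$ on $\mathcal{O}^c,\mathcal{O}^n$, so every support point genuinely carries positive weight and can be "tested" against the identity.

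The main obstacle I anticipate is the technical bookkeeping around the Dirac/point-mass formalism rather than any deep idea: making the equality-of-joints step rigorous requires care about the absolute-continuity (support) condition under which the KL-divergence is finite, so that a zero value truly forces equality of measures rather than mere agreement off a null set, and it requires treating $\delta$ consistently on the discrete, redefined spaces. Once positivity of the marginals is in hand, the injectivity conclusion itself is a short pigeonhole-style contradiction, so the conceptual difficulty is low.
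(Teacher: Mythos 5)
Your proof is correct, and it shares the paper's starting point: the minimum of $\mathcal{L}_{\mathrm{KL}}$ within the Dirac class is still zero (attained by the true posterior/noising pair, which are themselves Dirac because $f_n$ is bijective on the redefined spaces), so any optimizer forces the two joints to coincide, $p_n(o^n)q^*(o|o^n)=p(o)q^*(o^n|o)$. Where you diverge is in how injectivity is extracted from this identity. The paper marginalizes it over $o^n$ to obtain $\sum_{o^n:\,m^*_{\mathrm{de}}(o^n)=o}p_n(o^n)=p(o)$ and then argues by pigeonhole: since $|\mathcal{O}^c|=|\mathcal{O}^n|=N$ and $p_n$ is a permutation of $p$, a collision $m^*_{\mathrm{de}}(o^n_i)=m^*_{\mathrm{de}}(o^n_j)$ would leave some clean observation outside the image of $m^*_{\mathrm{de}}$ with $p(o)=0$, contradicting positivity. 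You instead read the joint identity pointwise: a collision at $o$ forces $m^*_n(o)$ to equal both $o^n_1$ and $o^n_2$, which is impossible for a function, and the symmetric evaluation gives injectivity of $m^*_n$. Your route is slightly more economical --- it uses only the positivity of the marginals on the redefined supports and never needs the equal-cardinality/permutation observation --- and it handles the two injectivity claims symmetrically, whereas the paper's closing ``similarly'' implicitly repeats its counting argument for $m^*_n$. Both arguments are sound and rest on the same two facts: vanishing KL forces equality of the joints, and positivity of the marginals rules out collisions.
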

\begin{proof}

 According to the properties of KL-divergence, $\mathcal{L}_{\mathrm{KL}}$ reaches the minimum if and only if the distributions are identical, i.e.:
\begin{equation}
\label{eq_simple_kl}
    p_n(o^n)q^*(o|o^n)=p(o)q^*(o^n|o).
\end{equation}

We first show that $m^*_{\mathrm{de}}$ is injective. It is easy to observe that $\sum_{o^n}p_n(o^n)q^*(o|o^n)=\sum_{o^n}p(o)q^*(o^n|o)=p(o)$. As mentioned above, $p_n(o^n)$ can be viewed as a permutation of $p(o)$. Therefore, if there exists $(o^n_i, o^n_j)$ such that $m^*_{\mathrm{de}}(o^n_i)=m^*_{\mathrm{de}}(o^n_j)$, there must exist an $o$ with $p(o) = 0$, which conflicts with the assumption that $p(o)>0, \forall o$. Similarly, we can prove that $m^*_n$ is injective. 
\end{proof}

We first show that the optimal denoising distribution $q^*(o|o^n)$ is the posterior denoising distribution of a noise function that is homogeneous to $f_n$. Using Lemma~\ref{lemma_injective}, we have that $m^*_n$ is an injective function, which means that $m^*_n$ can also be viewed as a noise function. Given Eq.~\ref{eq_simple_kl}, it follows that $\sum_op(o)q^*(o^n|o)=\sum_op(o)\delta(o^n-m^*_n(o))=\sum_o p_n(o^n)q^*(o|o^n)=p_n(o^n)$, i.e. $m^*_n$ is a noise function that is homogeneous to $f_n$. We then have:
$$
q^*(o|o^n) = \frac{p(o)q^*(o^n|o)}{p_n(o^n)}=\frac{p(o)q^*(o^n|o)}{\sum_op(o)q^*(o^n|o)}.
$$

Therefore, we have shown that the optimal denoising distribution is the posterior denoising distribution of $m^*_{n}$, which is a noise function that is homogeneous to $f_n$.

Then we show that the posterior denoising distribution of a noise function that is homogeneous to $f_n$ minimizes $\mathcal{L}_{\mathrm{KL}}$. Let $f_{n_i}$ denote a noise function that is homogeneous to $f_n$, we have $\sum_op(o)\delta(o^n-f_{n_i}(o))=p_n(o^n)$. We further obtain that:
$$
\sum\limits_o\frac{p(o)\delta(o^n-f_{n_i}(o))}{p_n(o^n)}=\frac{p_n(o^n)}{p_n(o^n)} = 1,
$$
which means $\frac{p(o)\delta(o^n-f_{n_i}(o))}{p_n(o^n)}$ is a valid distribution. By choosing $q(o|o^n)=\frac{p(o)\delta(o^n-f_{n_i}(o))}{p_n(o^n)}, q(o^n|o)=\delta(o^n-f_{n_i}(o))$, we can derive that $\mathcal{L}_{\mathrm{KL}}=0$.

\paragraph{The Number of Homogeneous Noise Functions} Assume the clean observation $o$ has $N$ different possible values $\mathbf{o}=\{o_1, \cdots, o_N\}$, and the noise function $f_n$ maps them to $N$ different cluttered observations $\mathbf{o}^n=\{o^n_1, \cdots, o^n_N\}$. The marginal probability satisfies that $p(o_i) = p_n(o^n_i)$.

Therefore, the number of homogeneous noise functions should equal the number of different mapping $f$ between $\mathbf{o}$ and $\mathbf{o}^n$, so that $p_n(f_n(o))=p_n(f(o))$ for any $o$. Consequently, the number of homogeneous noise functions is determined by the number of different $o_i$ with the same probability $p(o_i)$. Specifically, assuming that $p(o)$ has $M$ different probabilities, each probability corresponds to $K_j$ different observations. In other words:
$$
\mathbf{o} = \{\underbrace{(o_1,\cdots,o_{K_1})}_{p(o_1)=\cdots=p(o_{K_1)}}, \cdots\}, \sum_{j=1}^M K_j = N.
$$

Then the number of homogeneous noise functions is $\prod\limits_j K_j!$.

\paragraph{Reducing Homogeneous Noise Functions with Rewards}

We illustrate why it is feasible to reduce the number of homogeneous noise functions with rewards. Suppose that we only have four possible scenarios in clean environments: $p(o_1,r_1)= p(o_2,r_2)=0.5, p(o_1,r_2)=p(o_2,r_1)=0$. And the noise function $f_n$ is defined as: $f_n(o_1)=o^n_1, f_n(o_2)=o^n_2$. Therefore, the probability in cluttered environments with $f_n$ is $p(o^n_1,r_1)= p(o^n_2,r_2)=0.5, p(o^n_1,r_2)=p(o^n_2,r_1)=0$.

Lets define a new noise function $f_{n_1}$, with $f_{n_1}(o_1)=o^n_2, f_{n_1}(o_2)=o^n_1$. Therefore, the probability in cluttered environments with $f_{n_1}$ is $p(o^n_2,r_1)= p(o^n_1,r_2)=0.5, p(o^n_1,r_1)=p(o^n_1,r_1)=0$.

As a result, it is clear that $f_{n_1}$ is homogeneous to $f_n$ without rewards, yet it is no longer homogeneous to $f_n$ with rewards.

\subsection{Self-Consistent Model-based Adaptation}
\label{appendix_subsec_scma}

Below we provide a detailed derivation of SCMA's adaptation loss. From Eq.~\ref{eq_adaptation_raw_1}, $\mathcal{L}_{\mathrm{KL}}$ leads to the following objective:
\begin{equation}
\label{eq_adaptation_raw_2}
    \resizebox{\linewidth}{!}{$
        \displaystyle{
            \mathbb{E}_{p(o^n_{1:T}|a_{1:T})}\mathbb{E}_{q(o_{1:T}|o^n_{1:T})} \Big [ -\log p(o_{1:T}|a_{1:T}) -\log q(o^n_{1:T}|o_{1:T}) \Big].
        }
    $}
\end{equation}

As mentioned in Sec.~\ref{subsec_adaptation}, $-\log p(o_{1:T}|a_{1:T})$ can be substituted with the evidence lower bound (ELBO) estimated by a world model pre-trained in the clean environment~\citep{hafner2019learning}:
\begin{equation*}
        \begin{aligned} 
            & \log p_{\scriptscriptstyle \mathrm{wm}}(o_{1:T}|a_{1:T}) = \log \int p_{\scriptscriptstyle \mathrm{wm}}(o_{1:T}, s_{1:T}|a_{1:T}) \\
            \geq &\sum\limits_{t=1}^T \mathbb{E}_{q_{\scriptscriptstyle \mathrm{wm}}(s_{1:T}|a_{1:T}, o_{1:T})}[\underbrace{\log p_{\scriptscriptstyle \mathrm{wm}}(o_t|s_{\leq t}, a_{<t})}_{\mathcal{J}_{o}^t} \\
            &\quad -\underbrace{\mathrm{D}_\mathrm{KL}(q_{\scriptscriptstyle \mathrm{wm}}(s_t|s_{<t},a_{<t}, o_t)\Vert p_{\scriptscriptstyle \mathrm{wm}}(s_t|s_{<t}, a_{<t})}_{\mathcal{J}_{kl}^t})].
    \end{aligned}
\end{equation*}

Leveraging the pre-trained world model, we can turn Eq.~\ref{eq_adaptation_raw_2} into the subsequent objective. For notation simplicity, we choose to omit the expectation notation $\mathbb{E}_{p(o^n_{1:T}|a_{1:T})}$:

\begin{equation}
\label{eq_scma}
    \resizebox{1.0\linewidth}{!}{$
    \displaystyle{
        \begin{aligned}
        &\mathbb{E}_{q(o_{1:T}|o^n_{1:T})} \Big [ -\log p(o_{1:T}|a_{1:T}) -\log q(o^n_{1:T}|o_{1:T}) \Big] \\
        \leq &\mathbb{E}_{q(o_{1:T}|o^n_{1:T})} \Big [ -\sum\limits_{t=1}^T \mathbb{E}_{q_{\scriptscriptstyle \mathrm{wm}}(s_{1:T}|a_{1:T}, o_{1:T})}\big [\log p_{\scriptscriptstyle \mathrm{wm}}(o_t|s_{\leq t}, a_{<t}) \\
        & \quad\quad + \mathrm{D}_\mathrm{KL}(q_{\scriptscriptstyle \mathrm{wm}}(s_t|s_{<t},a_{<t}, o_t)\Vert p_{\scriptscriptstyle \mathrm{wm}}(s_t|s_{<t}, a_{<t}))\big ] \\
        & \quad\quad - \log q(o^n_{1:T}|o_{1:T}) \Big] \\
        \stackrel{(*)}{\approx} & \mathbb{E}_{q(o_{1:T}|o^n_{1:T})} \Big [ -\sum\limits_{t=1}^T \mathbb{E}_{q_{\scriptscriptstyle \mathrm{wm}}(s_{1:T}|a_{1:T}, o_{1:T})}\big [\log p_{\scriptscriptstyle \mathrm{wm}}(o_t|s_{\leq t}, a_{<t})\big ] \\
        & \quad\quad -\sum\limits_{t=1}^T \log q(o^n_t|o_t)\Big ] \\
        = & \sum\limits_{t=1}^T - \underbrace{\mathbb{E}_{q(o_{1:T}|o^n_{1:T})} \mathbb{E}_{q_{\scriptscriptstyle \mathrm{wm}}(s_{1:T}|a_{1:T}, o_{1:T})}\big [\log p_{\scriptscriptstyle \mathrm{wm}}(o_t|s_{\leq t}, a_{<t})\big ]}_{\mathcal{L}_{sc}} \\
        & \quad\quad - \underbrace{\mathbb{E}_{q(o_{1:T}|o^n_{1:T})}[\log q(o^n_t|o_t)]}_{\mathcal{L}_{n}}.
    \end{aligned}
    }
$}
\end{equation}

In $(*)$, we choose to drop the KL-loss term $\mathrm{D}_\mathrm{KL}(q_{\scriptscriptstyle \mathrm{wm}}(s_t|s_{<t},a_{<t}, o_t) \Vert p_{\scriptscriptstyle \mathrm{wm}}(s_t|s_{<t}, a_{<t}))$. The goal of the KL-loss term is to promote policy optimization by enabling trajectory generation, which is unnecessary during adaptation as we do not modify the policy. Moreover, we empirically find that adding the KL-loss term has a negative impact during adaptation by harming the reconstruction results, which is consistent with the findings in previous works~\citep{higgins2017beta,chen2018isolating}.

\paragraph{Adaptation with Rewards} The above framework can be simply extended to consider rewards by changing $(o_t)$ to $(o_t,r_t)$. Specifically, we first redefine $\mathcal{L}_{\mathrm{KL}}$ as following:
\begin{equation*}
\begin{aligned}
    \mathcal{L}_{\mathrm{KL}} \coloneqq & \mathrm{D}_{\mathrm{KL}} \Big(p(o^n_{1:T},r_{1:T}|a_{1:T})q(o_{1:T}|o^n_{1:T}) \\
    &\quad\quad\quad\quad \big\Vert p(o_{1:T},r_{1:T}|a_{1:T})q(o^n_{1:T}|o_{1:T})\Big),
\end{aligned}
\end{equation*}
which leads to the subsequent simplified objective (similar to the derivation in Eq.~\ref{eq_adaptation_raw_1}):
\begin{equation}
\label{eq_adaptation_raw_2_r}
    \begin{aligned}
        & \mathbb{E}_{p(o^n_{1:T}, r_{1:T}|a_{1:T})}\mathbb{E}_{q(o_{1:T}|o^n_{1:T})} \Big [ -\log p(o_{1:T}, r_{1:T}|a_{1:T}) \\
        & \quad\quad\quad\quad\quad\quad\quad\quad\quad\quad\quad\quad\quad\quad - \log q(o^n_{1:T}|o_{1:T}) \Big].
    \end{aligned}
\end{equation}

We can also extend the world model's objective to consider rewards:

\begin{equation}
\label{eq_rssm_r}
    \resizebox{\linewidth}{!}{$
    \displaystyle{
        \begin{aligned}
        & \log p_{\scriptscriptstyle \mathrm{wm}}(o_{1:T}, r_{1:T}|a_{1:T}) = \log \int p_{\scriptscriptstyle \mathrm{wm}}(o_{1:T}, s_{1:T}|a_{1:T}) \mathrm{d} s_{1:T} \\
            \geq &\sum\limits_{t=1}^T \mathbb{E}_{q_{\scriptscriptstyle \mathrm{wm}}(s_{1:T}|a_{1:T}, o_{1:T})}[\underbrace{\log p_{\scriptscriptstyle \mathrm{wm}}(o_t|s_{\leq t}, a_{<t})}_{\mathcal{J}_{o}^t} + \underbrace{\log p_{\scriptscriptstyle \mathrm{wm}}(r_t|s_{\leq t}, a_{<t})}_{\mathcal{J}_{rew}^t} \\
            &\quad -\underbrace{\mathrm{D}_\mathrm{KL}(q_{\scriptscriptstyle \mathrm{wm}}(s_t|s_{<t},a_{<t}, o_t)\Vert p_{\scriptscriptstyle \mathrm{wm}}(s_t|s_{<t}, a_{<t})}_{\mathcal{J}_{kl}^t})],
    \end{aligned}
    }
$}
\end{equation}
where we include a reward model $ p_{\scriptscriptstyle \mathrm{wm}}(r_t|s_{\leq t}, a_{<t})$ and reward loss $\mathcal{J}_{rew}^t$ to predict reward signals.

Similar to the derivation in Eq.~\ref{eq_scma}, we can combine Eq.~\ref{eq_adaptation_raw_2_r} and Eq.~\ref{eq_rssm_r} to obtain adaptation objective with rewards (we again omit $\mathbb{E}_{p(o^n_{1:T}, r_{1:T}|a_{1:T})}$ for notation simplicity):
\begin{equation*}
\resizebox{\linewidth}{!}{$
    \displaystyle{
    \begin{aligned}
        & \mathbb{E}_{q(o_{1:T}|o^n_{1:T})} \Big [ -\log p(o_{1:T}, r_{1:T}|a_{1:T}) - \log q(o^n_{1:T}|o_{1:T}) \Big]\\
        \leq & \sum\limits_{t=1}^T -\underbrace{\mathbb{E}_{q(o_{1:T}|o^n_{1:T})} \mathbb{E}_{q_{\scriptscriptstyle \mathrm{wm}}(s_{1:T}|a_{1:T}, o_{1:T})}\big [\log p_{\scriptscriptstyle \mathrm{wm}}(o_t|s_{\leq t}, a_{<t})\big ]}_{\mathcal{L}_{sc}} \\
        & \quad\quad -\underbrace{\mathbb{E}_{q(o_{1:T}|o^n_{1:T})} \mathbb{E}_{q_{\scriptscriptstyle \mathrm{wm}}(s_{1:T}|a_{1:T}, o_{1:T})}\big [\log p_{\scriptscriptstyle \mathrm{wm}}(r_t|s_{\leq t}, a_{<t})\big ]}_{\mathcal{L}_{rew}} \\
        & \quad\quad - \underbrace{\mathbb{E}_{q(o_t|o^n_t)}[\log q(o^n_t|o_t)]}_{\mathcal{L}_{n}}.
    \end{aligned}
    }
$}
\end{equation*}

\section{Experimental Details}

\subsection{Visually Distracting Environments}

In this section, we provide an overview of the involved environments in our experiments, including DMControl-GB~\citep{Hansen2020SelfSupervisedPA,hansen2021generalization}, DMControlView~\citep{yang2024movie}, and RL-ViGen~\citep{yuan2024rl}.

\begin{figure}[H]
\centering
\captionsetup[subfigure]{justification=centering}
\subcaptionbox{\\Original Environment\label{fig_env_raw}}[0.19\linewidth]{\includegraphics[width=0.88\linewidth]{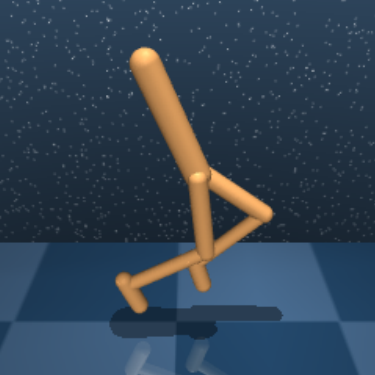}}
\subcaptionbox{$\mathrm{video\_hard}$ Environment\label{fig_env_video_hard}}[0.19\linewidth]{\includegraphics[width=0.88\linewidth]{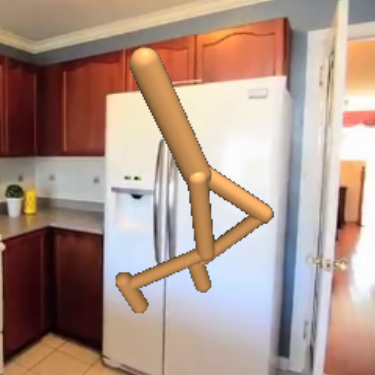}}
\subcaptionbox{$\mathrm{moving\_view}$ Environment\label{fig_env_moving}}[0.19\linewidth]{\includegraphics[width=0.88\linewidth]{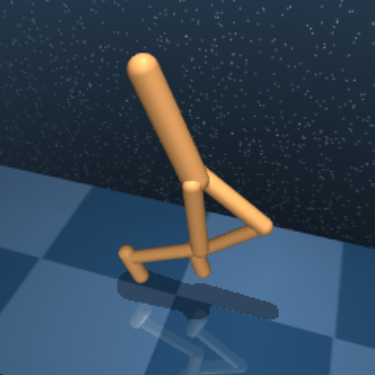}}
\subcaptionbox{$\mathrm{occlusion}$ Environment\label{fig_env_occlusion}}[0.19\linewidth]{\includegraphics[width=0.88\linewidth]{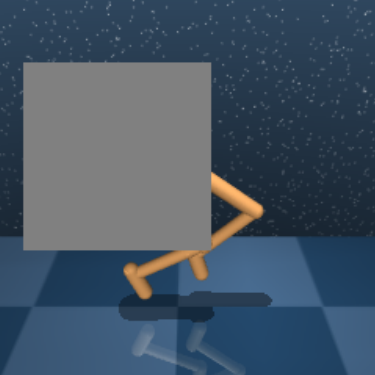}}
\subcaptionbox{$\mathrm{color\_hard}$ Environment\label{fig_env_color_hard}}[0.19\linewidth]{\includegraphics[width=0.88\linewidth]{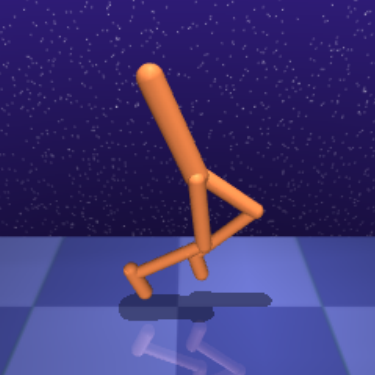}}

\caption{An overview of involved environments with DMControl. (a) The original DMControl (DeepMind Control Suite)~\citep{tassa2018deepmind} environment that we use to pre-train world models and policies. (b) The $\mathrm{video\_hard}$ environment~\citep{hansen2021generalization}, where the background is replaced by natural videos. (c) The $\mathrm{moving\_view}$ environment~\citep{yang2024movie} with moving camera views. (d) The $\mathrm{occlusion}$ environment where $1/4$ of each observation is randomly masked. (e) The $\mathrm{color\_hard}$ environment~\citep{hansen2021generalization}, where all objects are rendered with random colors.}
\label{fig_overview_dmc}
\end{figure}

\begin{figure}[H]
\centering
\captionsetup[subfigure]{justification=centering}
\subcaptionbox{\\Original Environment\label{fig_env_robo_raw}}[0.2\linewidth]{\includegraphics[width=0.9\linewidth]{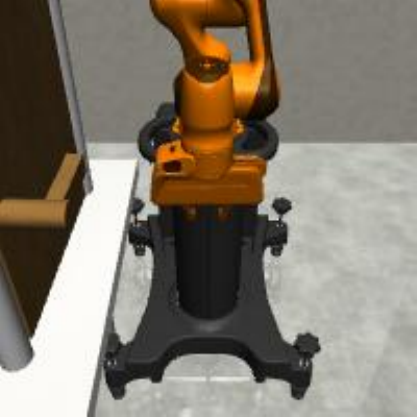}}
\hspace{2em}
\subcaptionbox{$\mathrm{eval\_easy}$ Environment\label{fig_env_robo_easy}}[0.2\linewidth]{\includegraphics[width=0.9\linewidth]{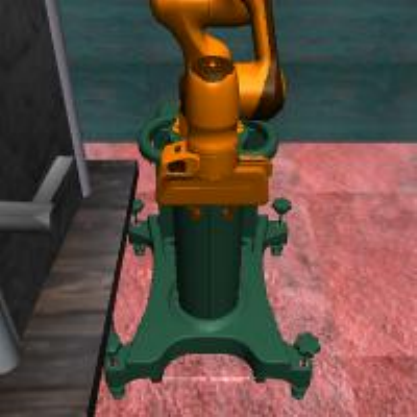}}
\hspace{2em}
\subcaptionbox{$\mathrm{eval\_extreme}$ Environment\label{fig_env_robo_extreme}}[0.2\linewidth]{\includegraphics[width=0.9\linewidth]{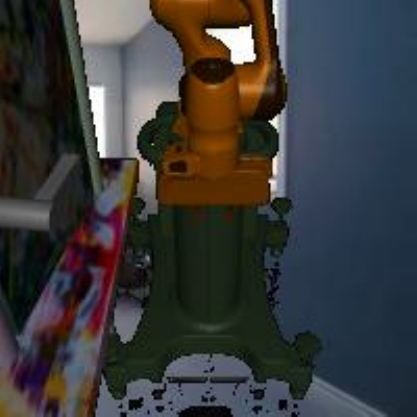}}

\caption{An overview of involved environments with Robosuite. (a) The original Robosuite environment~\citep{zhu2020robosuite} that we use to pre-train world models and policies. (b) The $\mathrm{eval\_easy}$ environment~\citep{yuan2024rl} includes changes in background appearance. (c) The $\mathrm{eval\_extreme}$ environment~\citep{yuan2024rl} employs a dynamic video background and includes varying lighting conditions.}
\label{fig_overview_rlvigen}
\end{figure}


\begin{table*}[tb]
\subcaptionbox{SCMA}[0.5\linewidth]{
\resizebox{0.9\linewidth}{!}{
    \tablestyle{1pt}{1.1}
    \centering
    \footnotesize
    \begin{tabular}{c|cccc}
        \toprule
        Task & $\mathrm{video\_hard}$ & $\mathrm{moving\_view}$ & $\mathrm{color\_hard}$ & $\mathrm{occlusion}$ \\
        \midrule
        $\mathrm{ball\_cup\text{-}catch}$ & 809 & 745 & 817 & 899 \\
        $\mathrm{cartpole\text{-}swingup}$ & 773 & 708 & 809 & 779 \\
        $\mathrm{finger\text{-}spin}$ & 948 & 952 & 965 & 920 \\
        $\mathrm{walker\text{-}stand}$ & 953 & 977 & 984 & 976 \\
        $\mathrm{walker\text{-}walk}$ & 722 & 922 & 954 & 902 \\
        \midrule
         Averaged & 841.0 & 860.8 & 905.8 & 895.2 \\
         \bottomrule
    \end{tabular}
}
}
\subcaptionbox{SCMA (w/o r)}[0.49\linewidth]{
\resizebox{0.9\linewidth}{!}{
    \tablestyle{1pt}{1.1}
    \centering
    \footnotesize
    \begin{tabular}{c|cccc}
        \toprule
        Task & $\mathrm{video\_hard}$ & $\mathrm{moving\_view}$ & $\mathrm{color\_hard}$ & $\mathrm{occlusion}$ \\
        \midrule
        $\mathrm{ball\_cup\text{-}catch}$ & 215 & 616 & 881 & 748 \\
        $\mathrm{cartpole\text{-}swingup}$ & 145 & 188 & 158 & 97\\
        $\mathrm{finger\text{-}spin}$ & 769 & 46 & 814 & 268 \\
        $\mathrm{walker\text{-}stand}$ & 328 & 929 & 745 & 297 \\
        $\mathrm{walker\text{-}walk}$ & 129 & 478 & 634 & 149 \\
        \midrule
         Averaged & 317.2 & 451.4 & 646.4 & 311.8 \\
         \bottomrule
    \end{tabular}
}
}
\\
\subcaptionbox{MoVie}[0.49\linewidth]{
\resizebox{0.9\linewidth}{!}{
    \tablestyle{1pt}{1.1}
    \centering
    \footnotesize
    \begin{tabular}{c|cccc}
        \toprule
        Task & $\mathrm{video\_hard}$ & $\mathrm{moving\_view}$ & $\mathrm{color\_hard}$ & $\mathrm{occlusion}$ \\
        \midrule
        $\mathrm{ball\_cup\text{-}catch}$ & 41 & 951 & 67 & 33\\
        $\mathrm{cartpole\text{-}swingup}$ & 83 & 196 & 102 & 120\\
        $\mathrm{finger\text{-}spin}$ & 2 & 896 & 652 & 1 \\
        $\mathrm{walker\text{-}stand}$ & 127 & 712 & 121 & 124\\
        $\mathrm{walker\text{-}walk}$ & 39 & 810 & 38 & 52 \\
        \midrule
         Averaged & 58.4 & 713.0 & 196.0 & 66.0 \\
         \bottomrule
    \end{tabular}
}
}
\subcaptionbox{PAD}[0.49\linewidth]{
\resizebox{0.9\linewidth}{!}{
    \tablestyle{1pt}{1.1}
    \centering
    \footnotesize
    \begin{tabular}{c|cccc}
        \toprule
        Task & $\mathrm{video\_hard}$ & $\mathrm{moving\_view}$ & $\mathrm{color\_hard}$ & $\mathrm{occlusion}$ \\
        \midrule
        $\mathrm{ball\_cup\text{-}catch}$ & 130 & 750 & 563 & 145 \\
        $\mathrm{cartpole\text{-}swingup}$ & 123 & 561 & 630 & 142 \\
        $\mathrm{finger\text{-}spin}$ & 96 & 603 & 803 & 15 \\
        $\mathrm{walker\text{-}stand}$ & 336 & 955 & 797 & 305 \\
        $\mathrm{walker\text{-}walk}$ & 108 & 645 & 468 & 94 \\
        \midrule
         Averaged & 158.6 & 702.8 & 652.2 & 140.2 \\
         \bottomrule
    \end{tabular}
}
}
\vspace{-1em}
\caption{We report the detailed performance of SCMA, SCMA(w/o r) and other adaptation-based baselines across $4$ distracting environments.}
\label{table_adaptation_all}
\end{table*}

\begin{figure*}[tb]
\centering
\centerline{\includegraphics[width=0.7\linewidth]{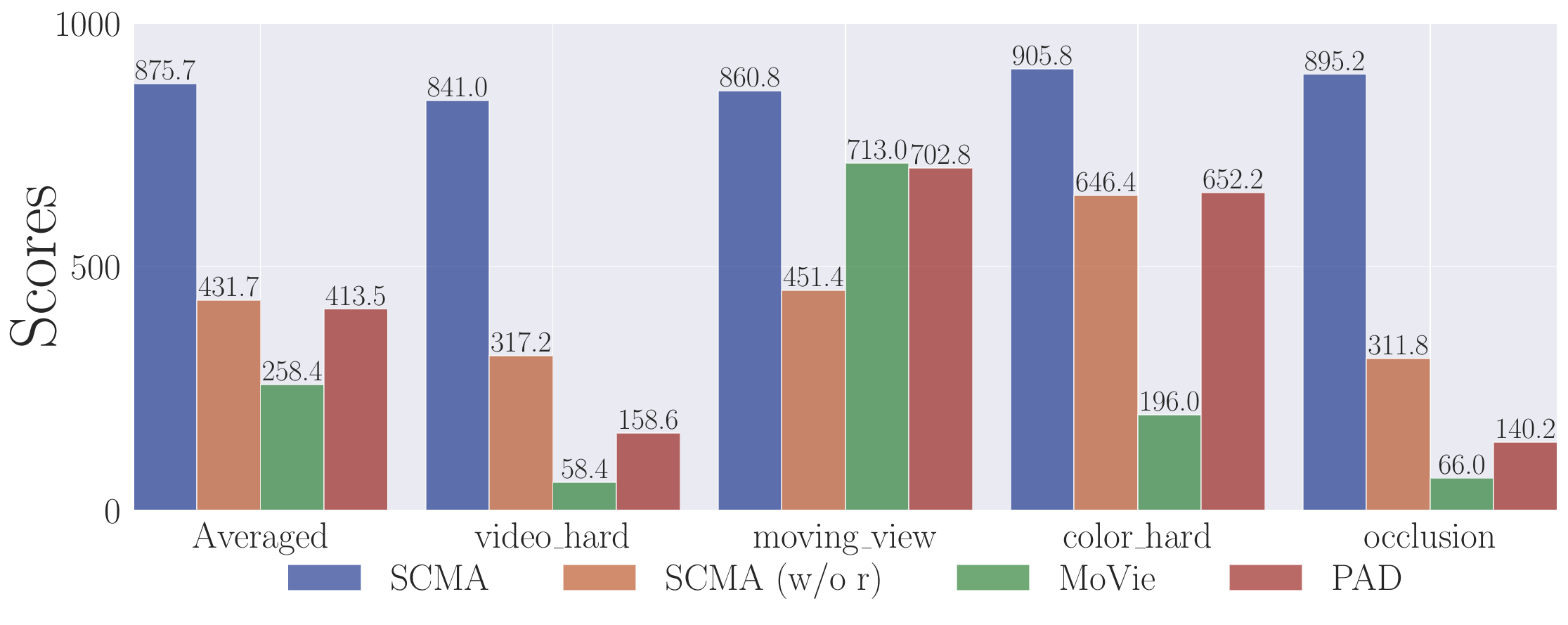}}
\vspace{-1em}
\caption{Average performance of SCMA, SCMA (w/o r), and other adaptation-based baselines across $4$ distracting environments. Our proposed method SCMA achieves the highest performance under distractions.} 
\label{fig_average_dmc}
\vspace{-1em}
\end{figure*}

\subsection{Quantitative Results}
\label{appendix_subsec_quantitative}
In this section, we provide detailed experimental results of SCMA. Unless otherwise stated, the result of each task is evaluated over $3$ seeds and we report the performance of the last episode. For table-top manipulation tasks, we report the performance of each trained agent by collecting $10$ trials on each scene ($100$ trials in total).

\subsubsection{Adaptation Results in DMControl}

We provide the performance curve of SCMA when adapting to distracting environments. For SCMA, the agent is trained in clean environments for $1$M timesteps and then adapts to visually distracting environments for another $0.1$M timesteps ($0.4$M for $\mathrm{video\_hard}$). It should be noted that although we adapt the agent to the $\mathrm{video\_hard}$ environments for $0.4$M steps, SCMA can achieve competitive results with only $10\%$ timesteps in most tasks, including $\mathrm{finger}\text{-}\mathrm{spin}$, $\mathrm{walker}\text{-}\mathrm{stand}$, $\mathrm{walker}\text{-}\mathrm{walk}$, as shown in Fig.~\ref{fig_appendix_video_hard}.

\begin{figure}[H]
    \centering
    \includegraphics[width=\linewidth]{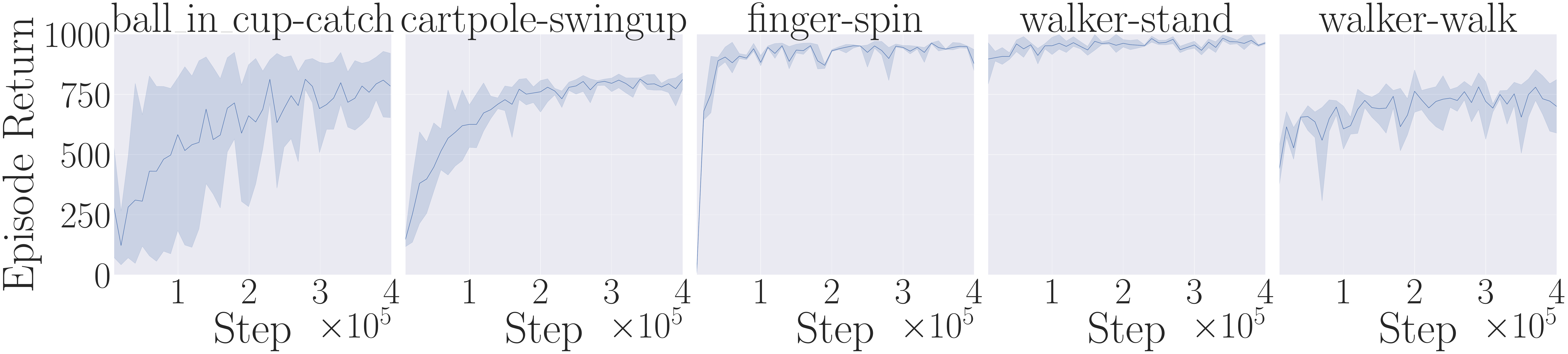}
    \caption{Adaptation performance of SCMA in the $\mathrm{video\_hard}$ environment.} 
    \label{fig_appendix_video_hard}
\end{figure}
\vspace{-2em}

\begin{figure}[H]
    \centering
    \includegraphics[width=\linewidth]{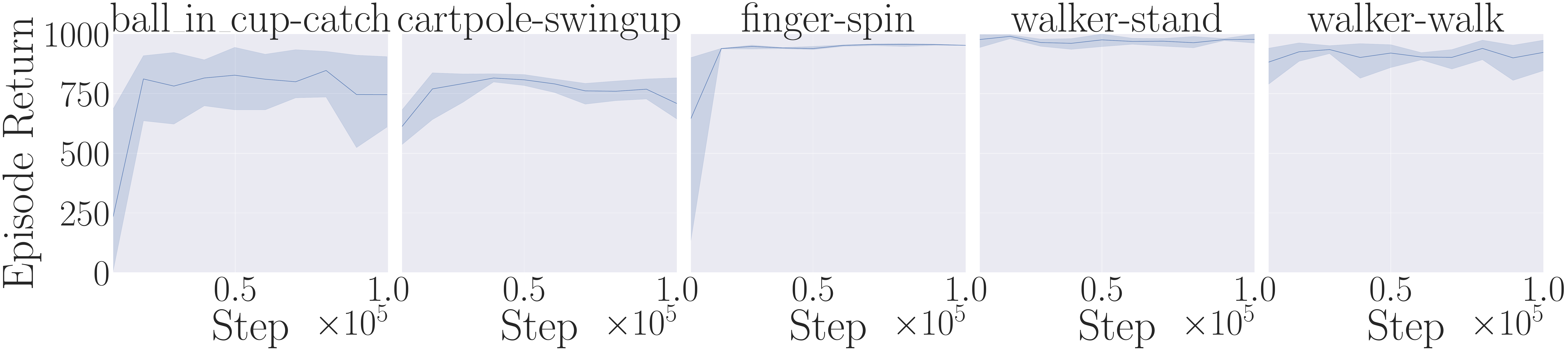}
    \caption{Adaptation performance of SCMA in the $\mathrm{moving\_view}$ environment.}
    \label{fig_appendix_moving_view}
\end{figure}
\vspace{-2em}

\begin{figure}[H]
    \centering
    \includegraphics[width=\linewidth]{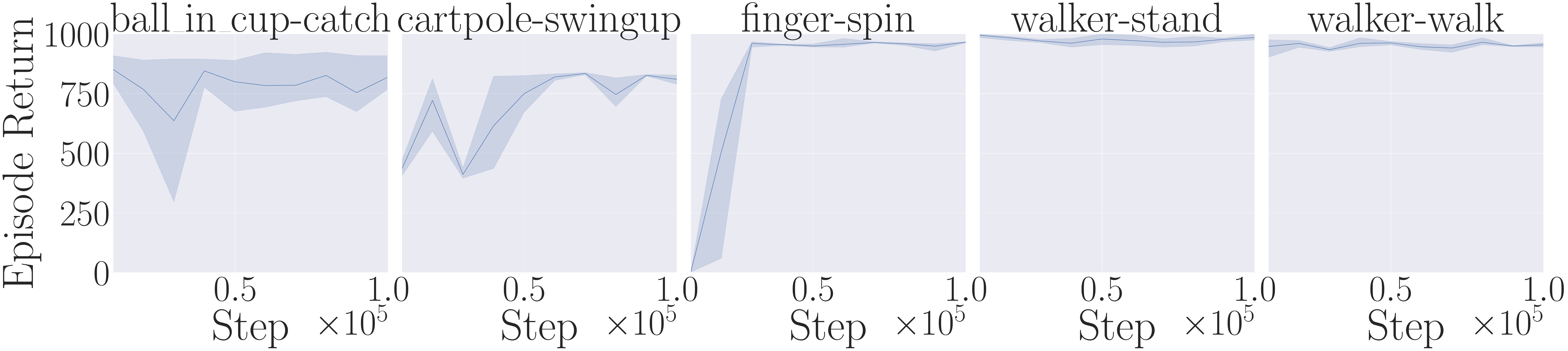}
    \caption{Adaptation performance of SCMA in the $\mathrm{color\_hard}$ environment.}
    \label{fig_appendix_color_hard}
\end{figure}
\vspace{-2em}

\begin{figure}[H]
    \centering
    \includegraphics[width=\linewidth]{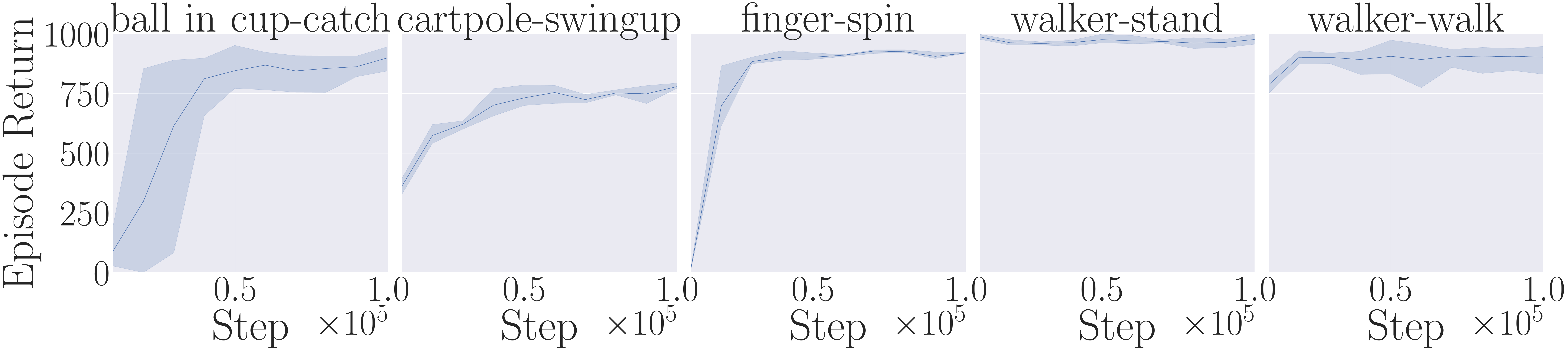}
    \caption{Adaptation performance of SCMA in the $\mathrm{occlusion}$ environment.}
    \label{fig_appendix_occlusion}
\end{figure}
\vspace{-2em}

\subsection{Adaptation without Rewards}

We report the detailed performance of SCMA, SCMA (w/o r), and other adaptation-based baselines in $4$ different distracting environments, where SCMA (w/o r) means removing $\mathcal{L}_{rew}$ during adaptation. From the detailed results presented in Table~\ref{table_adaptation_all} and average results presented in Fig.~\ref{fig_average_dmc}, we can see that SCMA obtains the best performance under distractions. Moreover, the results in Fig.~\ref{fig_average_dmc} show that even without rewards, SCMA (w/o r) still achieves the highest average performance compared to other adaptation-based baselines.

\subsubsection{Adaptation Results in RL-ViGen}

We present the adaptation curve of SCMA in RL-ViGen~\citep{yuan2024rl} in Fig.~\ref{fig_appendix_eval_easy} and Fig.~\ref{fig_appendix_eval_extreme}. For SCMA, the agent is trained in clean environments for $0.5$M timesteps and then adapts to visually distracting environments for another $0.5$M timesteps. Following~\citet{yuan2024rl}, we evaluate each trained agent with $10$ trails on each scene ($100$ trails in total) and report final results in Table.~\ref{table_rlvigen}. 


\begin{figure}[H]
    \centering
    \includegraphics[width=\linewidth]{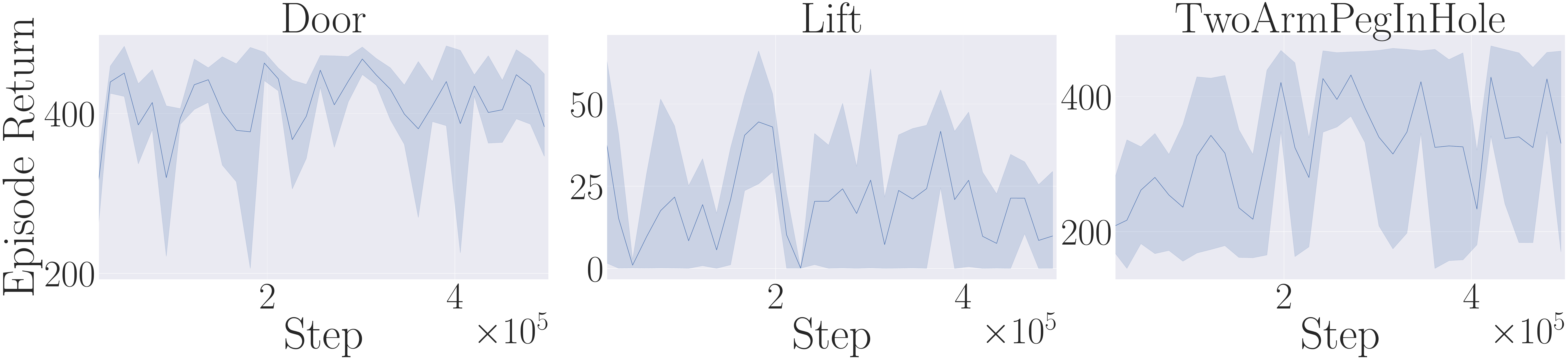}
    \caption{Adaptation performance of SCMA in the $\mathrm{eval\_easy}$ environment in RL-ViGen.} 
    \label{fig_appendix_eval_easy}
\end{figure}
\vspace{-1em}

\begin{figure}[H]
    \centering
    \includegraphics[width=\linewidth]{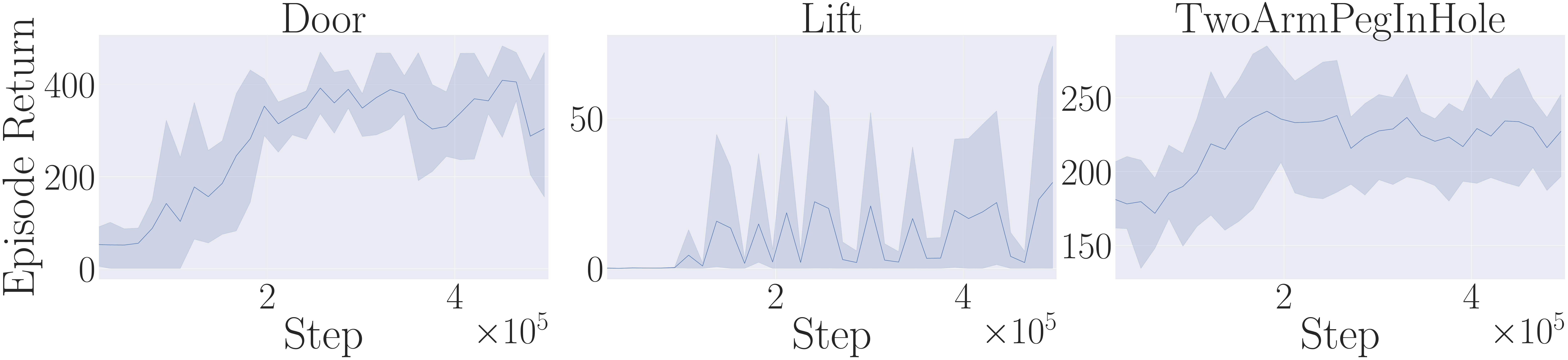}
    \caption{Adaptation performance of SCMA in the $\mathrm{eval\_extreme}$ environment in RL-ViGen.} 
    \label{fig_appendix_eval_extreme}
\end{figure}
\vspace{-1em}

\begin{figure*}[t]
\centering
\centerline{\includegraphics[width=1.0\linewidth]{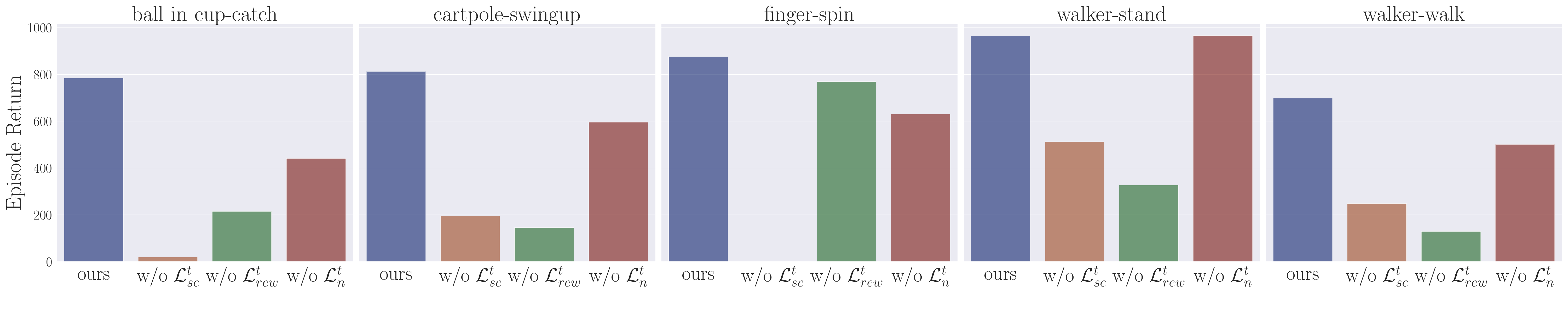}}
\vspace{-1em}
\caption{Ablation for different loss components' effects on the adaptation results in $\mathrm{video\_hard}$ environments. We separately remove the visual loss $\mathcal{L}^t_{visual}$, the reward prediction loss $\mathcal{L}^t_{rew}$ and mask penalty loss $\mathcal{L}^t_{reg}$ during adaptation. } 
\label{fig_different_loss}
\vspace{-1.5em}
\end{figure*}

\subsubsection{Zero-shot Generalization Performance}

We also investigate the zero-shot generalization performance of SCMA across different tasks in the $\mathrm{video\_hard}$ environment. Specifically, we separately optimize denoising models in $\mathrm{walker\text{-}}walk$ and $\mathrm{walker\text{-}stand}$ task. Then we take the denoising model adapted to one task and directly evaluate its zero-shot performance in another task. The results are presented in Table~\ref{table_cross_task} in the Appendix.

\begin{table}[h]
    \centering
    \footnotesize
\resizebox{0.9\linewidth}{!}{
    \begin{tabular}{c|cccc}
        \toprule
        Task & \multicolumn{2}{c}{$\mathrm{walker\text{-}stand}$} & \multicolumn{2}{c}{$\mathrm{walker\text{-}walk}$}\\
         \cmidrule(lr){2-3}\cmidrule(lr){4-5}
         Condition & In Domain & Transfer & In Domain & Transfer \\
         \midrule
         SCMA & 953\scriptsize{$\pm$4}& 956\scriptsize{$\pm$18} & 722\scriptsize{$\pm$89} & 652.14\scriptsize{$\pm$76} \\
         \bottomrule
    \end{tabular}
}
    \caption{In-domain and zero-shot generalization performance in $\mathrm{video\_hard}$ environment. We take denoising models trained in $\mathrm{walker\text{-}stand}$ and $\mathrm{walker\text{-}walk}$ and report their zero-shot generalization results evaluated in $\mathrm{walker\text{-}walk}$ and $\mathrm{walker\text{-}stand}$ separately (labeled as \textit{transfer}).}
    \vspace{-1.5em}
    \label{table_cross_task}
\end{table}

\subsubsection{Wall Clock Time Report}
\label{sec_wall_time}

Although we report the performance in the $\mathrm{video\_hard}$ environment after $0.4$M adaptation steps for best results, SCMA can usually achieve competitive results within much fewer steps. To demonstrate this idea, we report the wall clock time and adaptation episode for SCMA to reach $90\%$ of the final performance in the $\mathrm{video\_hard}$ environments. All experiments are conducted with NVIDIA GeForce RTX 4090 and Intel(R) Xeon(R) Gold 6330 CPU. 

\begingroup
\begin{table}[ht]
\tablestyle{0.8pt}{1.0}
    \centering
    \scriptsize
    \begin{tabular}{c|ccccc}
        \toprule
         Time/Episode &  ball\_in\_cup-catch & cartpole-swingup & finger-spin & walker-stand & walker-walk\\
         \midrule
         SCMA & 6.6h/180 & 6.1h/170 & 1.3h/40 & 0.17h/10 & 1.3h/40\\
         \bottomrule
    \end{tabular}
    \vspace{-0.5em}
    \caption{Wall clock time for SCMA to reach $90\%$ of the final performance in the $\mathrm{video\_hard}$ environments.}
    \vspace{-1.5em}
    \label{tabale_video_hard_wall_time}
\end{table}
\endgroup

From the Table above, we can see that SCMA only needs approximately $10\%$ of total adaptation time-steps to obtain a competitive performance for most tasks. Moreover, SCMA is a policy-agnostic method. Therefore, it can naturally utilize existing offline datasets to promote adaptation, and thus further alleviate the need to interact with downstream distracting environment.

\subsubsection{Ablation Results for Different Loss Components}

We also provide a detailed ablation on how different loss components in SCMA affect the final adaptation performance in $\mathrm{video\_hard}$ environments. We separately removed the $3$ loss components from SCMA during adaptation, namely self-consistent reconstruction loss $\mathcal{L}^t_{sc}$, reward prediction loss $\mathcal{L}^t_{rew}$, and noisy reconstruction loss $\mathcal{L}^t_{n}$. The results are presented in Fig.~\ref{fig_different_loss}.

\subsection{Real-world Robot Data}
\label{appendix_subsec_robot_score}
We report the detailed performance of SCMA on real-world robot data in Table~\ref{table_robot_scma}. The goal of the inverse dynamics model (IDM) is to predict the intermediate action $a_t$ based on observations $(o_t,o_{t+1})$. To verify SCMA's effectiveness on real-world robot data, we first pre-train the IDM and world model with data collected in the $\mathrm{train}$ setting. Then we optimize the denoising model in distracting settings and compare the action prediction accuracy of the IDM when using cluttered observations (labeled as IDM) versus using the outputs of the denoising model (labeled as IDM{\scriptsize{$+$SCMA}}).

\begin{table}[h]
    \centering
    \footnotesize
    \begin{tabular}{c|cc}
        \toprule
        Settings & IDM & IDM\scriptsize{$+$SCMA} \\
        \midrule
        $\mathrm{train}$ & 2.86 & - \\
        $\mathrm{fruit\_bg}$ & $3.75$ & $3.52$ \\
        $\mathrm{color\_bg}$ & $3.83$ & $3.73$ \\
        $\mathrm{varying\_light}$ & $3.22$ & $3.10$ \\
        \bottomrule
    \end{tabular}
    \caption{The Mean Squared Error (MSE) action prediction error of the IDM under different settings.}
    \vspace{-1em}
    \label{table_robot_scma}
\end{table}




\subsection{Qualitative Results}
\label{appendix_qualitative}

\subsubsection{Visualization of Adaptation Results in Visually Distracting Environments}
In this section, we provide the visualization of SCMA's adaptation results in different visually distracting environments. We visualize the environment's raw observation as well as the outputs of the denoising model in Fig.~\ref{appendix_fig_visualize_scma_dmc}.

\begin{figure}[H]
\centering
\subcaptionbox{$\mathrm{video\_hard}$\label{fig_visualization_video_hard}}[0.8\linewidth]{
    \centering
    \includegraphics[width=0.7\linewidth]{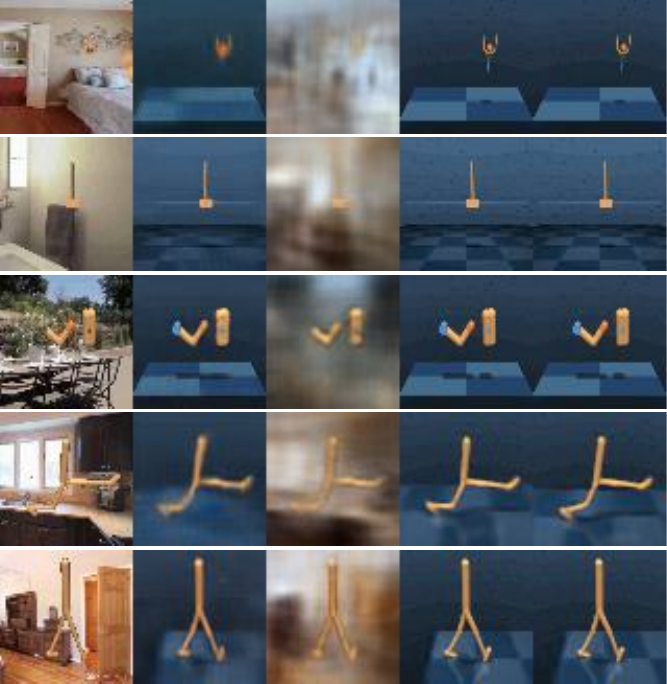}
}
\end{figure}
\vspace{-2em}
\begin{figure}[H]\ContinuedFloat
\centering
\subcaptionbox{$\mathrm{moving\_view}$\label{fig_visualization_moving_view}}[0.8\linewidth]{
    \centering
    \includegraphics[width=0.7\linewidth]{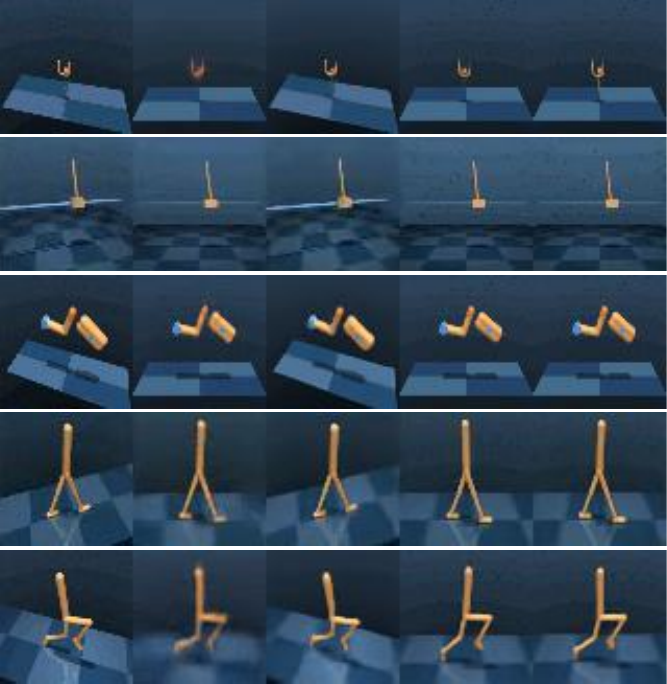}
}
\\
\subcaptionbox{$\mathrm{color\_hard}$\label{fig_visualization_color_hard}}[0.8\linewidth]{
    \centering
    \includegraphics[width=0.7\linewidth]{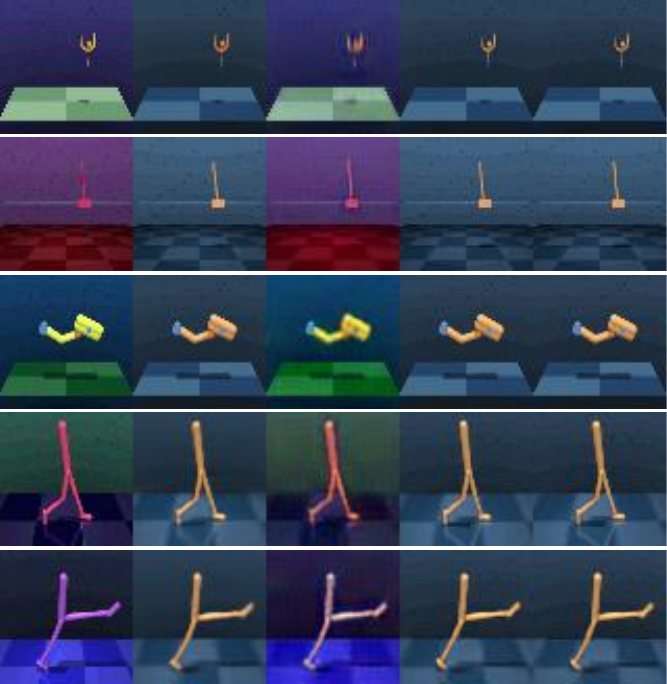}
}
\\
\subcaptionbox{$\mathrm{occlusionw}$\label{fig_visualization_occlusion}}[0.8\linewidth]{
    \centering
    \includegraphics[width=0.7\linewidth]{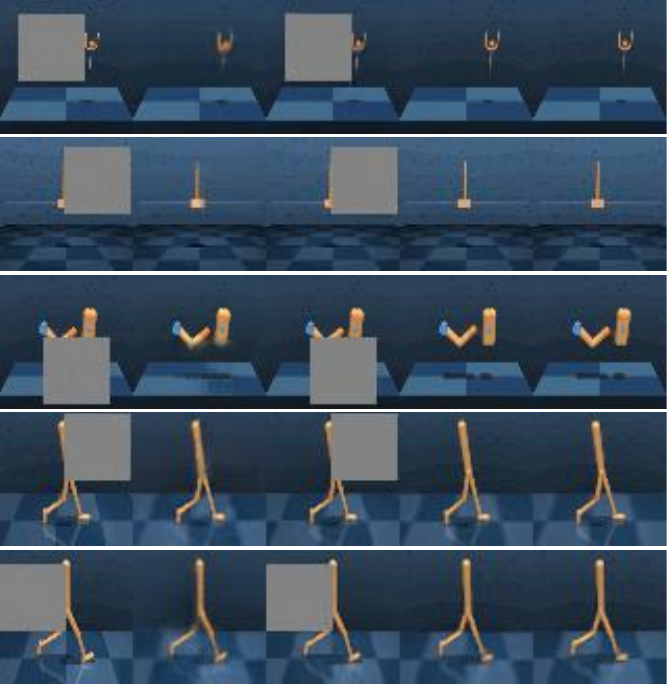}
}
\caption{Visualization of SCMA in different distracting environments. The columns from left to right separately represent (1) cluttered observations (2) outputs of the denoising model $m_{\mathrm{de}}$ (3) outputs of the noisy model $m_{\mathrm{n}}$ (4,5) prior and posterior reconstruction results by world models.}
\label{appendix_fig_visualize_scma_dmc}
\end{figure}

\section{Training Details}
For better reproductivity, we report all the training details of Sec.~\ref{sec_exp}, including the design choice of the denoising model, baseline implementations, and the hyper-parameters of SCMA.

\subsection{Baselines}
\label{appendix_baseline}
To evaluate the generalization capability of SCMA, we compare it to a variety of baselines in visually distracting environments. We will now introduce how different baselines are implemented and evaluated in each setting.

\paragraph{PAD~\citep{Hansen2020SelfSupervisedPA}}: PAD uses surrogate tasks to fine-tune the policy's representation to promote adaptation, such as image rotation prediction, and action prediction. The code follows \url{https://github.com/nicklashansen/dmcontrol-generalization-benchmark}.

\paragraph{MoVie~\citep{yang2024movie}}: MoVie incorporates spatial transformer networks (STN~\citep{jaderberg2015spatial}) to fill the performance gap caused by varying camera views. The code follows \url{https://github.com/yangsizhe/MoVie}.

\paragraph{SGQN~\citep{Bertoin2022LookWY}}: SGQN improves the generalization capability of RL agents by introducing a surrogate loss that regularizes the agent to focus on important pixels. The code follows \url{https://github.com/SuReLI/SGQN}.

\paragraph{TIA~\citep{fu2021learning}}: TIA learns a structured representation that separates task-relevant features from irrelevant ones. The code follows \url{https://github.com/kyonofx/tia}.

\paragraph{DreamerPro~\citep{deng2022dreamerpro}}: DreamerPro utilizes prototypical representation learning~\citep{caron2020unsupervised} to create representation invariant to distractions. The code follows \url{https://github.com/fdeng18/dreamer-pro}.

\paragraph{TPC~\citep{nguyen2021temporal}}: TPC improves the performance under distractions by forcing the representation to capture temporal predictable features. The code follows a newer version of TPC with higher results implemented in \url{https://github.com/fdeng18/dreamer-pro}.

For baselines implemented by us, their scores are taken from the paper if the evaluation setting is the same. Otherwise, their scores are estimated with our implementation. For other baselines, their scores are directly taken from the papers. It should be noted that although TIA, TPC, and DreamerPro were evaluated in environments with distracting video backgrounds in the first place, the original implementation uses a different video source. Therefore, we implement their algorithm with the official code while modifying the environment to use the same video source as $\mathrm{video\_hard}$ from DMControl-GB~\citep{hansen2021generalization}. 

\paragraph{Distracting Environments}: For $\mathrm{video}\_\mathrm{hard}$ and $\mathrm{color}\_\mathrm{hard}$ environment, the settings follow DMControl-GB~\citep{hansen2021generalization}. For $\mathrm{moving}\_\mathrm{view}$, the setting follows DMControl-View in MoVie~\citep{yang2024movie}. For $\mathrm{occlusion}$, we randomly cover $1/4$ of each observation with a grey rectangle. For evaluations in RL-ViGen~\citep{yuan2024rl}, the setting follows the original implementation.





\subsection{Implementation Details}
\label{appendix_subsec_implementation}
\paragraph{Implementation of the Denoising Model}

The goal of the denoising model is to transfer cluttered observations to corresponding clean observations. Therefore, we implement the denoising model as a Resnet-based generator~\citep{Zhu2017UnpairedIT}, which is a generic image-to-image model. However, as mentioned in Sec.~\ref{subsec_homo}, we can encode some inductive bias in the denoising model’s architecture to handle specific types of distractions. In RL-ViGen, we consider two specific architectures of the denoising model: 1) mask model $m_{\mathrm{mask}}: \mathbb{R}^{h\times w\times c}\mapsto[0,1]^{h\times w\times c}$ to handle background distractions. 2) bias model $m_{\mathrm{bias}}:\mathbb{R}^{h\times w\times c}\mapsto\mathbb{R}^{h\times w\times 1}$ to handle lighting changes. The final denoise output is thus $m_{\mathrm{mask}}(o^n_t) \cdot o^n_t + m_{\mathrm{bias}}(o^n_t)$.


\subsection{Real-world Robot Data}
\label{appendix_subsec_robot_setting}
To verify the effectiveness of SCMA on real-world robot data. We manually collect real-world data with a Mobile ALOHA robot by performing an apple-grasping task. Specifically, we use the right gripper to grasp the apple and then put it in the target location. We record images captured by a front camera and $14$ joint poses, where the latter is the expected output of the inverse dynamics model (IDM).

We collect trajectories under $1$ normal and $3$ distracting settings: 1) $\mathrm{train}$: the normal setting with minimum distractions. 2) $\mathrm{fruit\_bg}$: various fruits are placed in the background. 3) $\mathrm{color\_bg}$: the scene is disrupted by a blue light. 4) $\mathrm{varying\_light}$: the lighting condition is modified. In the $\mathrm{train}$ setting, we first collect $20$ apple-grasping trajectories. Moreover, since IDM can be trained with trajectories collected with any policies, we additionally collect $50$ trajectories in the $\mathrm{train}$ setting with a random policy. Then we collect $10$ apple-grasping trajectories in each distracting setting. We provide visualization of each setting in Fig.~\ref{fig_robot_visualization}. During trajectory collection, data was recorded at a frame rate of $30$ fps, with each trajectory consisting of approximately $900$ to $1000$ frames. The quantitative results on real-world robot data are provided in Sec.~\ref{appendix_subsec_robot_score}.

\subsection{Hyper-parameters}
\label{sec_parameters}
\begin{table}[H]
    \centering
    \small
    \begin{tabular}{c|c}
    \toprule
    \textbf{Hyper-parameters}   &  Value \\
    \hline
    optimizer & adam \\
    adam\_epsilon & $1e^{-7}$ \\
    batch\_size & $55$ \\
    cnn\_activation\_function & $\mathrm{relu}$ \\
    collect\_interval & $100$ \\
    dense\_activation\_function & $\mathrm{elu}$ \\
    experience\_size & $1e^{6}$ \\
    grad\_clip\_norm & $100$ \\
    max\_episode\_length & $1000$ \\
    steps  &  $1e^{6}$ \\
    observation\_size & $64$ \\
    \hline
    \textbf{World Model} & \\
    \hline
    belief\_size & $200$ \\
    embedding\_size & $1024$ \\
    hidden\_size & $200$ \\
    model\_lr & $1e^{-3}$ \\
    \hline
    \textbf{Actor-Critic} & \\
    \hline
    actor\_lr & $8e^{-5}$ \\
    gamma & $0.99$ \\
    lambda & $0.95$ \\
    planning\_horizon & $15$ \\
    value\_lr & $8e^{-5}$ \\
    \hline
    \textbf{denoising model} & \\
    \hline
    denoise\_lr & $1e^{-4}$ \\
    denoise\_embedding\_size & $1024$ \\
    \bottomrule
    \end{tabular}
    \vspace{0.5em}
    \caption{Details of hyper-parameters.}
    \label{appendix_table_hyperpara}
\end{table}

The hyper-parameters of baselines from official implementations are taken from their implementations (see Appendix~\ref{appendix_baseline} above). SCMA is implemented based on a wildly adopted Dreamer~\citep{hafner2019dream} repository \url{https://github.com/yusukeurakami/dreamer-pytorch} and inherits the hyper-parameters. For completeness, we still list all hyper-parameters including inherited ones in Table~\ref{appendix_table_hyperpara}. \textbf{We also provide codes in the supplementary materials.}

\subsection{Algorithm}
\label{appendix_subsec_code}
We provide the pseudo-code of SCMA below:
\begin{algorithm}[H]
   \caption{Self-consistent Model-based Adaptation}
   \label{alg_SCMA}
\begin{algorithmic}
   \STATE {\bfseries Input:} Pre-trained world model model $p_{\scriptscriptstyle{\mathrm{wm}}}, q_{\scriptscriptstyle\mathrm{wm}}$, pre-trained policy $\pi$, denoising model $m_{\mathrm{de}}$ (denoted as $m_{\theta}$), noisy model $m_{\mathrm{n}}$ (denoted as $m_{\phi}$), distracting environment $\mathrm{Env}$, replay buffer $\mathcal{B}$, time horizon $H$, step-size $\eta$.
   \STATE {\bfseries Output:} Optimized denoising model $m_{\mathrm{de}}$.
    \FOR{\textit{each iteration}}
    \FOR{\textit{each update step}}
        \STATE {\small $\mathrm{//\, Sample\ a\ mini\text{-}batch\ from\ the\ buffer.}$}
        \STATE $\{o^n_{i}, a_{i}, r_{i}\}_{i:i+H}\sim\mathcal{B}$.
        \STATE {\small $\mathrm{//\, Optimize\ }m_{\mathrm{de}}\mathrm{\ and\ }m_{\mathrm{n}}\mathrm{\ with\ Eq.}$~\ref{eq_scma_loss}.}
        \STATE $\theta \leftarrow\theta - \eta \nabla_{\theta}\mathcal{L}_{\mathrm{SCMA}}$.
        \STATE $\phi \leftarrow\phi - \eta \nabla_{\phi}\mathcal{L}_{\mathrm{SCMA}}$.
    \ENDFOR
    \FOR{\textit{each collection step}}
        \STATE {\small $\mathrm{//\ Sample\ action\ with\ pocliy\ and\ denoising\ model.}$}
        \STATE $a_t\sim \pi (\cdot | m_{\theta}(o^n_t))$.
        \STATE {\small $\mathrm{//\ Interaction\ with\ the\ distracting\ environment.}$}
        \STATE $\{o^n_{t+1}, r_{t+1}\}\sim\mathrm{Env}(o_t^n,a_t)$
        \STATE {\small $\mathrm{// store\ data\ to\ the\ replay\ buffer.}$}
        \STATE $\mathcal{B}\leftarrow\mathcal{B}\cup\{o^n_t,a_t,o^n_{t+1},r_{t+1} \}$. 
    \ENDFOR
    \ENDFOR
\end{algorithmic}
\end{algorithm}

\end{document}